\newcolumntype{P}[1]{>{\centering\arraybackslash}p{#1}}
\newcolumntype{L}[1]{>{\raggedright\let\newline\\\arraybackslash\hspace{0pt}}m{#1}}
\newcolumntype{C}[1]{>{\centering\let\newline\\\arraybackslash\hspace{0pt}}m{#1}}
\newcolumntype{R}[1]{>{\raggedleft\let\newline\\\arraybackslash\hspace{0pt}}m{#1}}
\newtheorem{thm}{Theorem}
\newtheorem{prop}{Proposition}
\newtheorem{lem}{Lemma}
\newtheorem{cor}{Corollary}
\newtheorem{rem}{Remark}
\newcommand{\Cj}{\mathcal{C}^1}
\newcommand{\vep}{\varepsilon}
\newcommand{\ph}{\varphi}
\newcommand{\parshrinky}{\vspace{-1mm}}
\newcommand{\seckiny}{\vspace{-1mm}}
\title{Sobolev Training for Neural Networks}
\author{
  Wojciech Marian Czarnecki, Simon Osindero, Max Jaderberg\\
  \textbf{Grzegorz Swirszcz, and Razvan Pascanu}\\
  DeepMind,
  London, UK\\
  \texttt{\{lejlot,osindero,jaderberg,swirszcz,razp\}@google.com} \\
}
\begin{document}

\maketitle

\begin{abstract}

At the heart of deep learning we aim to use neural networks as function approximators --  training them to produce outputs from inputs in emulation of a ground truth function or data creation process.
In many cases we only have access to input-output pairs from the ground truth, however it is becoming more common to have access to derivatives of the target output with respect to the input -- for example when the ground truth function is itself a neural network such as in network compression or distillation. 
Generally these target derivatives are not computed, or are ignored. This paper introduces Sobolev Training for neural networks, which is a method for incorporating these target derivatives in addition the to target values while training.
By optimising neural networks to not only approximate the function’s outputs but also the function’s derivatives we encode additional information about the target function within the parameters of the neural network. Thereby we can improve the quality of our predictors, as well as the data-efficiency and generalization capabilities of our learned function approximation.
We provide theoretical justifications for such an approach as well as examples of empirical evidence on three distinct domains: regression on classical optimisation datasets, distilling policies of an agent playing Atari, and on large-scale applications of synthetic gradients. 
In all three domains the use of Sobolev Training, employing target derivatives in addition to target values, results in models with higher accuracy and stronger generalisation.

\end{abstract}
\parshrinky
\section{Introduction}
\seckiny
Deep Neural Networks (DNNs) are one of the main tools of modern machine learning. They are consistently proven to be powerful function approximators, able to model a wide variety of functional forms -- from image recognition~\cite{he2016deep,simonyan2014very}, through audio synthesis~\cite{van2016wavenet}, to human-beating policies in the ancient game of GO~\cite{silver2016mastering}. 
In many applications the process of training a neural network consists of receiving a dataset of input-output pairs from a ground truth function, and minimising some loss with respect to the network's parameters. This loss is usually designed to encourage the network to produce the same output, for a given input, as that from the target ground truth function.
Many of the ground truth functions we care about in practice  have an unknown analytic form, \emph{e.g.} because they are the result of a natural physical process, and therefore we only have the observed input-output pairs for supervision. However, there are scenarios where we do know the analytic form and so are able to compute the ground truth gradients (or higher order derivatives), alternatively sometimes these quantities may be simply observable.
A common example is when the ground truth function is itself a neural network; for instance this is the case for distillation~\cite{hinton2015distilling,rusu2015policy}, compressing neural networks~\cite{han2015deep}, and the prediction of synthetic gradients~\cite{jaderberg2016decoupled}.
Additionally, if we are dealing with an environment/data-generation process (vs. a pre-determined set of data points), then even though we may be dealing with a black box we can still approximate derivatives using finite differences. 
In this work, we consider how this additional information can be  incorporated in the learning process, and what advantages it can provide in terms of data efficiency and performance. We propose Sobolev Training (ST) for neural networks as a simple and efficient technique for leveraging derivative information about the desired function in a way that can easily be incorporated into any training pipeline using modern machine learning libraries.

The approach is inspired by the work of Hornik~\cite{hornik1991approximation} which proved the universal approximation theorems for neural networks in Sobolev spaces -- metric spaces where distances between functions are defined both in terms of their differences in values and differences in values of their derivatives.

In particular, it was shown that a sigmoid network can not only approximate a function's value arbitrarily well, but that the network's derivatives with respect to its inputs can approximate the corresponding derivatives of the ground truth function arbitrarily well too.
%
Sobolev Training exploits this property, and tries to match not only the output of the function being trained but also its derivatives. 
\begin{figure}[t]
\centering
\includegraphics[width=0.95\textwidth]{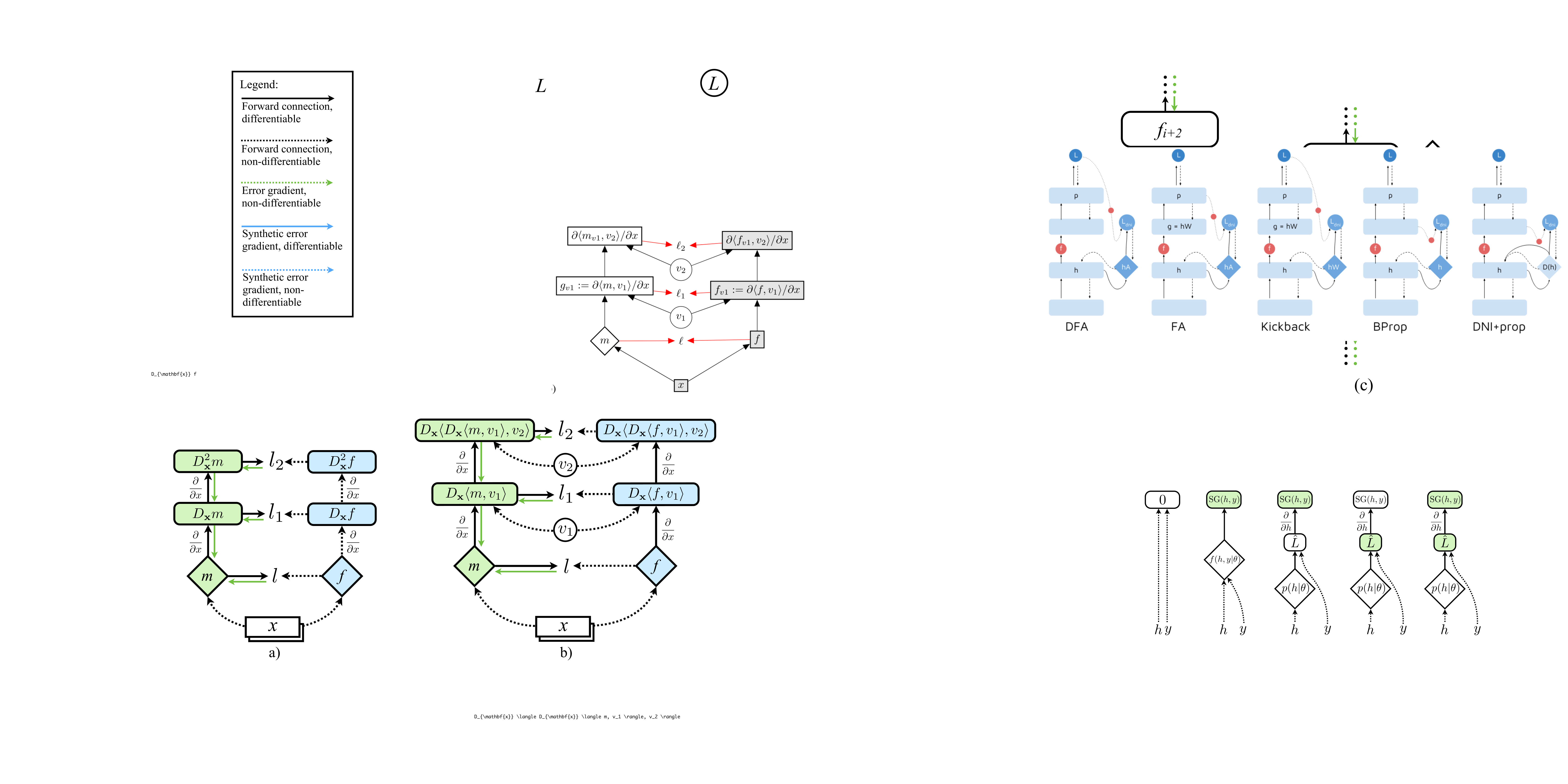}
\caption{a) Sobolev Training of order 2. Diamond nodes $m$ and $f$ indicate parameterised functions, where $m$ is trained to approximate $f$. Green nodes receive supervision. 
Solid lines indicate connections through which error signal from loss $l$, $l_1$, and $l_2$ are backpropagated through to train $m$. 
b) Stochastic Sobolev Training of order 2. If $f$ and $m$ are multivariate functions, the gradients are Jacobian matrices. 
To avoid computing these high dimensional objects, we can efficiently compute and fit their projections on a random vector $v_j$ sampled from the unit sphere. 
}
\label{fig:models}
\end{figure}

There are several related works which have also exploited derivative information for function approximation. For instance Wu et al. \cite{wu2017exploiting} and antecedents propose a technique for Bayesian optimisation with Gaussian Processess (GP), where it was demonstrated that the use of information about gradients and Hessians can improve the predictive power of GPs. 
In previous work on neural networks, derivatives of predictors have usually been used either to penalise model complexity (e.g. by pushing Jacobian norm to 0~\cite{rifai2011higher}), or to encode additional, hand crafted invariances to some transformations (for instance, as in Tangentprop~\cite{simard1991tangent}), or estimated derivatives for dynamical systems~\cite{gallant1992learning} and very recently to provide additional learning signal during attention distillation~\cite{zagoruyko2016paying}\footnote{Please relate to Supplementary Materials, section 5 for details}.
Similar techniques have also been used in critic based Reinforcement Learning (RL), where a critic's derivatives are trained to match its target's derivatives~\cite{werbos1992approximate,miller1995neural,fairbank2012simple,fairbank2012value,tassa2007least} using small, sigmoid based models.
Finally, Hyv{\"a}rinen proposed Score Matching Networks~\cite{hyvarinen2009estimation}, which are based on the somewhat surprising observation that one can model unknown derivatives of the function without actual access to its values -- all that is needed is a sampling based strategy and specific penalty. However, such an estimator has a high variance~\cite{vincent2011connection}, thus it is not really useful when true derivatives are given.

To the best of our knowledge and despite its simplicity, the proposal to directly match network derivatives to the true derivatives of the target function has been minimally explored for deep networks, especially modern ReLU based models. In our method, we show that by using the additional knowledge of derivatives with  Sobolev Training we are able to train better models -- models which achieve lower approximation errors and generalise to test data better -- and reduce the sample complexity of learning. The contributions of our paper are therefore threefold:  
(\textbf{1}): We introduce Sobolev Training -- a new paradigm for training neural networks.
(\textbf{2}): We look formally at the implications of matching derivatives, extending previous results of Hornik~\cite{hornik1991approximation} and showing that modern architectures are well suited for such training regimes.
(\textbf{3}): Empirical evidence demonstrating that Sobolev Training leads to improved performance and generalisation, particularly in low data regimes. Example domains are: regression on classical optimisation problems; policy distillation from RL agents trained on the Atari domain; and training deep, complex models using synthetic gradients -- we report the first successful attempt to train a large-scale ImageNet model using synthetic gradients.
\parshrinky
\section{Sobolev Training}
\label{sec:model}
\seckiny
We begin by introducing the idea of training using Sobolev spaces. When learning a function $f$, we may have access to not only the output values $f(x_i)$ for training points $x_i$, but also the values of its $j$-th order derivatives with respect to the input, $D^j_\mathbf{x} f(x_i)$.
In other words, instead of the typical training set consisting of pairs $\{(x_i, f(x_i))\}_{i=1}^N$ we have access to
$(K+2)$-tuples $\{(x_i, f(x_i), D_{\mathbf{x}}^{1} f (x_i), ..., D_{\mathbf{x}}^{K} f (x_i))\}_{i=1}^N$. 
In this situation, the derivative information can easily be incorporated into training a neural network model of $f$ by making derivatives of the neural network match the ones given by $f$.

Considering a neural network model $m$ parameterised with $\theta$, one typically seeks to minimise the empirical error in relation to $f$ according to some loss function $\ell$
$$
\sum_{i=1}^N \ell (m(x_i|\theta), f(x_i)).
$$
When learning in Sobolev spaces, this is replaced with:
\begin{equation}
 \label{eq:sobolev}
 \sum_{i=1}^N \left [ \ell (m(x_i|\theta), f(x_i)) + \sum_{j=1}^K \ell_j \left ( D_{\mathbf{x}}^{j} m(x_i|\theta), D_{\mathbf{x}}^{j} f(x_i)  \right) \right ],
\end{equation}
where $\ell_j$ are loss functions measuring error on $j$-th order derivatives. This causes the neural network to encode derivatives of the target function in its own derivatives. Such a model can still be trained using backpropagation and off-the-shelf optimisers.

A potential concern is that this optimisation might be expensive when either the output dimensionality of $f$ or the order $K$ are high, however one can reduce this cost through stochastic approximations. Specifically, if $f$ is a multivariate function, instead of a vector gradient, one ends up with a full Jacobian matrix which can be large. 
To avoid adding computational complexity to the training process, one can use an efficient, stochastic version of Sobolev Training:  instead of computing a full Jacobian/Hessian, one just computes its projection onto a random vector (a direct application of a  known estimation trick ~\cite{rifai2011higher}).
%
%
In practice, this means that during training we have a random variable $v$ sampled uniformly from the unit sphere, and we match these random projections instead:
\begin{equation}
\label{eq:sobolevapprox}
\sum_{i=1}^N \left [ \ell (m(x_i|\theta), f(x_i)) + \sum_{j=1}^K \mathbb{E}_{v^j}\left [ \ell_j \left ( \left \langle D_{\mathbf{x}}^{j} m(x_i|\theta), v^j \right \rangle , \left \langle D_{\mathbf{x}}^{j} f(x_i), v^j \right \rangle\right) \right  ] \right ].
\end{equation}

Figure~\ref{fig:models} illustrates compute graphs for non-stochastic and stochastic Sobolev Training of order 2.
%
\parshrinky
\section{Theory and motivation}
\label{sec:theory}
\seckiny
While in the previous section we defined Sobolev Training, it is not obvious that modeling the derivatives of the target function $f$ is beneficial to function approximation, or that optimising such an objective is even feasible. In this section we motivate and explore these questions theoretically, showing that the Sobolev Training objective is a well posed one, and that incorporating derivative information has the potential to drastically reduce the sample complexity of learning.

Hornik showed~\cite{hornik1991approximation} that neural networks with non-constant, bounded, continuous activation functions, with continuous derivatives up to order $K$ are universal approximators in the Sobolev spaces of order $K$, thus showing that sigmoid-networks are indeed capable of approximating elements of these spaces arbitrarily well. However, nowadays we often use activation functions such as ReLU which are neither bounded nor have continuous derivatives. The following theorem shows that for $K=1$  we can use ReLU function (or a similar one, like leaky ReLU)
to create neural networks that are universal approximators in Sobolev spaces.
We will use a standard symbol $\Cj(S)$ (or simply $\Cj$) to denote a space of functions which are continuous, differentiable, and have a continuous derivative on a space $S$~\cite{krantz2012handbook}. All proofs are given in the Supplementary Materials (SM).
\begin{thm}
Let $f$ be a  $\Cj$ function on a compact set. Then, for every positive $\vep$ there exists a single hidden layer neural network with a ReLU (or a leaky ReLU) activation which approximates $f$ in Sobolev space $\mathcal{S}_1$ up to $\epsilon$ error.
\end{thm}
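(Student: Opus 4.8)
The plan is to deduce the ReLU statement from Hornik's theorem rather than construct the approximant from scratch. Hornik~\cite{hornik1991approximation} guarantees that any \Cj{} function $f$ on a compact set lies in $\mathcal{S}_1$ and can be approximated there, to within $\vep/2$, by a single hidden layer network $g(x)=\sum_{i=1}^{M} a_i \varphi(w_i^\top x + b_i)+c$ using a smooth (e.g.\ sigmoidal) activation $\varphi$. The idea is then to replace each smooth unit by a piecewise-linear surrogate built from ReLUs. The crucial observation is that if $\tilde\varphi$ is a one-dimensional ReLU network approximating $\varphi$, then $\tilde\varphi(w_i^\top x + b_i)$ is again a sum of ReLUs of affine functions of $x$, so the resulting approximant $\tilde g$ is \emph{still} a single hidden layer ReLU network. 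Thus the whole problem reduces to a one-dimensional approximation statement, applied to each of the finitely many units of $g$.

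The one-dimensional core is the following claim: a \Cj{} function $h$ on a compact interval can be approximated, simultaneously in value and in derivative (sup norm), by a continuous piecewise-linear function, and every such function is exactly representable by a single hidden layer ReLU network. First I would fix a uniform partition $x_0<\dots<x_n$ and let $\tilde h$ be the piecewise-linear interpolant of $h$ at the nodes. Uniform convergence $\tilde h\to h$ is standard; for the derivative, the mean value theorem gives that the slope of $\tilde h$ on $(x_k,x_{k+1})$ equals $h'(\xi_k)$ for some interior $\xi_k$, and uniform continuity of $h'$ on the compact interval then forces $\|h'-\tilde h'\|_\infty\to 0$ as the mesh shrinks. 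Finally, writing the slopes as $m_0,\dots,m_{n-1}$, the interpolant admits the exact representation
\[
\tilde h(x)=h(x_0)+m_0\,\sigma(x-x_0)+\sum_{k=1}^{n-1}(m_k-m_{k-1})\,\sigma(x-x_k),
\]
a single hidden layer ReLU network on the interval (the same construction works for leaky ReLU after an elementary change of basis).

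With this lemma, I would control the full error by the triangle inequality. Applying the one-dimensional result to each activation on the compact range $\{w_i^\top x+b_i\}$ produces $\tilde\varphi_i$ with $\|\varphi-\tilde\varphi_i\|_\infty$ and $\|\varphi'-\tilde\varphi_i'\|_\infty$ as small as desired; summing over the $M$ units, the value error of $\tilde g$ is bounded by $\sum_i|a_i|\,\|\varphi-\tilde\varphi_i\|_\infty$, while the gradient error is bounded by $\sum_i|a_i|\,\|\varphi'-\tilde\varphi_i'\|_\infty\,\|w_i\|$, since $\nabla_x\tilde\varphi_i(w_i^\top x+b_i)=\tilde\varphi_i'(w_i^\top x+b_i)\,w_i$. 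Choosing the meshes fine enough makes $\|g-\tilde g\|_{\mathcal{S}_1}<\vep/2$, and combining with Hornik's bound yields $\|f-\tilde g\|_{\mathcal{S}_1}<\vep$.

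I expect the one-dimensional derivative estimate to be the main obstacle and the heart of the argument, precisely because the derivative of a ReLU network is piecewise constant and discontinuous, whereas $f'$ is continuous. The resolution exploits that $\mathcal{S}_1$ only asks for first-order closeness: the mean value theorem pins the average slope of each linear piece to a genuine value of $h'$, so uniform continuity upgrades this to sup-norm control of the derivative despite the discontinuities at the finitely many breakpoints (which form a null set and are harmless even for the essential supremum). The same reasoning also explains why the result is stated only for $K=1$: a plain ReLU network has vanishing second derivative almost everywhere, so it cannot match a nonzero $f''$ and the argument genuinely breaks at higher order.
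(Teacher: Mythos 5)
Your proposal is correct and follows essentially the same route as the paper's own proof: invoke Hornik's Sobolev universal approximation for sigmoid networks, then replace each one-dimensional unit by a piecewise-linear ReLU surrogate whose value and slope are controlled via the mean value theorem and uniform continuity of the derivative on a compact set. If anything, your error bookkeeping is slightly more careful than the paper's, since you explicitly track the factors $|a_i|$ and $\|w_i\|$ when propagating the per-unit error to the full network, and you correctly note that restricting to the compact range of $w_i^\top x + b_i$ lets you avoid the paper's separate lemma extending the piecewise-linear approximation to all of $\mathbb{R}$.
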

\vspace{-0.2cm}
This suggests that the Sobolev Training objective is achievable, and that we can seek to encode the values and derivatives of the target function in the values and derivatives of a ReLU neural network model. Interestingly, we can show that if we seek to encode an arbitrary function in the derivatives of the model then this is impossible not only for neural networks but also for any arbitrary differentiable predictor on compact sets.
\begin{thm}
Let $f$ be a  $\Cj$ function. Let $g$ be a continuous function satisfying $\|g - \tfrac{\partial f}{\partial x}\|_{\infty} > 0$. Then, there exists an $\eta > 0$ such that for any $\Cj$ function $h$ either  $\|f - h\|_{\infty} \ge \eta$ or $\left\|g - \frac{\partial h}{\partial x} \right\|_{\infty} \ge \eta$.
\end{thm}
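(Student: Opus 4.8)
The plan is to argue by contradiction, exploiting the fact that although two functions being close in the supremum norm does \emph{not} force their derivatives to be close pointwise, it \emph{does} force the integrals of their derivatives over any subinterval to agree, via the Fundamental Theorem of Calculus. This integral constraint is what turns out to be incompatible with $h'$ tracking a target $g$ that differs from $\tfrac{\partial f}{\partial x}$.

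First I would set $\psi := g - \tfrac{\partial f}{\partial x}$. Since $g$ is continuous and $f \in \Cj$, the function $\psi$ is continuous, and by hypothesis $\|\psi\|_\infty > 0$. Hence there is a point $x_0$ in the domain with $|\psi(x_0)| > 0$, and by continuity a radius $r>0$ and a constant $\delta>0$ such that $\psi$ has constant sign on $[x_0-r,\,x_0+r]$ with $|\psi| \ge \delta$ there; therefore $\left| \int_{x_0-r}^{x_0+r} \psi \right| \ge 2r\delta$. Crucially, $r$ and $\delta$ depend only on $f$ and $g$, not on any candidate $h$, so they can be used to fix $\eta$ in advance.

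The key step is then to suppose, for a fixed $\eta$ to be chosen, that some $\Cj$ function $h$ satisfies \emph{both} $\|f-h\|_\infty < \eta$ and $\|g - h'\|_\infty < \eta$, and to derive a contradiction. Applying the Fundamental Theorem of Calculus to $h-f$ gives, for any $s<t$, the telescoping identity $\int_s^t (h'-f') = (h(t)-f(t)) - (h(s)-f(s))$, whence $\left| \int_s^t (h'-f') \right| \le 2\|f-h\|_\infty < 2\eta$; this is the place where uniform closeness of the \emph{primitives} is exactly the right hypothesis. Combining with $\left|\int_s^t (h' - g)\right| \le (t-s)\|g-h'\|_\infty < (t-s)\eta$ and the triangle inequality yields $\left| \int_s^t \psi \right| < (t-s)\eta + 2\eta$. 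Taking $s=x_0-r$ and $t=x_0+r$ gives $2r\delta \le \left|\int_{x_0-r}^{x_0+r}\psi\right| < 2r\eta + 2\eta$. Choosing $\eta := \tfrac{r\delta}{r+1}$ makes the right-hand side equal to $2r\delta$, so the chain collapses to $2r\delta < 2r\delta$, a contradiction. Thus for this $\eta$ no $\Cj$ function $h$ can have both errors below $\eta$, which is exactly the claim. For the several-variable reading, where $\tfrac{\partial f}{\partial x}$ is the gradient and $g$ a continuous vector field, $\|g-\nabla f\|_\infty>0$ forces some coordinate $i$ to satisfy $g_i(x_0)\neq\partial_i f(x_0)$; I would run the identical one-dimensional argument along the segment $x_0 + t\,e_i$, using $\partial_i h$ in place of $h'$ and $g_i$ in place of $g$, with no assumption that $g$ is itself a gradient field.

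The conceptual crux, and the only place where genuine care is required, is recognising that one must \emph{not} try to bound $h'-f'$ pointwise: this is false, since $h$ can be uniformly close to $f$ while $h'$ oscillates arbitrarily. The whole argument hinges on passing to averaged (integrated) quantities through the Fundamental Theorem of Calculus, where the sup-norm control on values becomes usable. Once that is in place, everything else is routine real analysis, and the extension to several variables adds only bookkeeping.
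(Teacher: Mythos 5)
Your proof is correct, but it takes a genuinely different route from the paper's. The paper argues softly: negating the claim yields a sequence $h_n$ with $h_n \to f$ and $\partial h_n/\partial x \to g$ uniformly, and the classical theorem on term-by-term differentiation of uniformly convergent sequences then forces $g = \partial f/\partial x$, contradicting $\|g - \partial f/\partial x\|_\infty > 0$. You instead inline the quantitative content of that very theorem: the Fundamental Theorem of Calculus converts sup-norm closeness of $h$ to $f$ into the bound $|\int_s^t (h'-f')| \le 2\|f-h\|_\infty$, which, combined with the pointwise bound on $h'-g$, contradicts the lower bound $|\int \psi| \ge 2r\delta$ on an interval where $\psi = g - \partial f/\partial x$ keeps a fixed sign, and your choice $\eta = r\delta/(r+1)$ does close the chain to the strict inequality $2r\delta < 2r\delta$. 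The payoff of your version is that it is self-contained (no black-box appeal to the term-by-term differentiation theorem, whose standard proof is exactly this FTC computation) and that it produces an explicit admissible $\eta$ in terms of how far, and over how large a region, $g$ deviates from $\partial f/\partial x$; the paper's version is shorter if one takes the classical theorem as given. Your reduction of the multivariate case to a one-dimensional slice along a coordinate segment through the point where $g_i \ne \partial_i f$ is also sound, modulo the tacit assumption (shared with, and indeed left entirely implicit in, the paper's proof) that the domain contains such a segment.
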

\vspace{-0.2cm}
However, when we move to the regime of finite training data, we can encode any arbitrary function in the derivatives (as well as higher order signals if the resulting Sobolev spaces are not degenerate), as shown in the following Proposition.
\begin{prop}
Given any two functions $f :S \rightarrow  \mathbb{R} $ and $g :S \rightarrow  \mathbb{R}^d$ on $S \subseteq \mathbb{R}^d$ and a finite set $\Sigma \subset S$, there exists neural network $h$ with a ReLU (or a leaky ReLU) activation such that $\forall x \in \Sigma: f(x) = h(x)$ and $g(x) = \tfrac{\partial h}{\partial x}(x)$ (it has 0 training loss). 
\end{prop}
\vspace{-0.2cm}
Having shown that it is possible to train neural networks to encode both the values and derivatives of a target function, we now formalise one possible way of showing that Sobolev Training has lower sample complexity than regular training.

\begin{figure}[t]
\centering
\begin{tabular}{c|c}
\includegraphics[width=0.25\textwidth,valign=t]{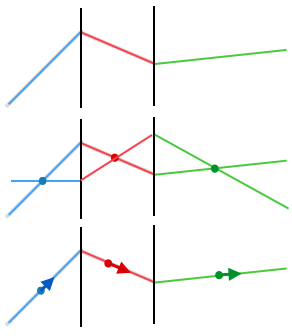}\;\;\;\;\;&
\includegraphics[width=0.65\textwidth,valign=t]{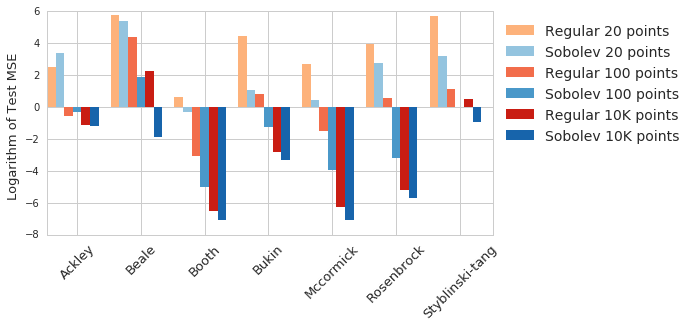}
\end{tabular}
\caption{\emph{Left:} From top: Example of the piece-wise linear function; Two (out of a continuum of) hypotheses consistent with 3 training points, showing that one needs two points to identify each linear segment; The only hypothesis consistent with 3 training points enriched with derivative information. \emph{Right:} Logarithm of test error (MSE) for various optimisation benchmarks with varied training set size (20, 100 and 10000 points) sampled uniformly from the problem's domain. }
\label{fig:toy_reg}
\label{fig:piecewise}
\end{figure}

Let $\mathcal{F}$ denote the family of functions parametrised by $\omega$. We define $K_{reg} = K_{reg}(\mathcal{F})$ to be a measure of the amount of data needed to learn some target function $f$. That is $K_{reg}$ is the smallest number for which there holds: for every $f_\omega \in \mathcal{F}$ and every set of distinct $K_{reg}$ points $(x_1, ..., x_{K_{reg}})$ such that $\forall_{i=1,...,K_{reg}} f(x_i) = f_\omega(x_i) \Rightarrow f = f_\omega$. $K_{sob}$ is defined analogously, but the final implication is of form $ f(x_i) = f_\omega(x_i) \wedge \frac{\partial f}{\partial x}(x_i) = \frac{\partial f_\omega}{\partial x}(x_i) \Rightarrow f = f_\omega$. 
Straight from the definition there follows:
\begin{prop}
\label{prop::}
For any $\mathcal{F}$, there holds $K_{sob}(\mathcal{F}) \leq K_{reg}(\mathcal{F})$.
\end{prop}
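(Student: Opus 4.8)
The plan is to argue directly from the definitions, exploiting the fact that the Sobolev identifiability condition imposes strictly more constraints at each sample point than the regular one. Concretely, for a fixed integer $K$ let $P(K)$ denote the statement ``for every $f_\omega \in \mathcal{F}$ and every choice of $K$ distinct points $x_1,\dots,x_K$, agreement of values $f(x_i)=f_\omega(x_i)$ for all $i$ forces $f=f_\omega$,'' and let $Q(K)$ denote the corresponding statement in which the hypothesis is strengthened to agreement of both values and first derivatives at each $x_i$. By the definitions in the excerpt, $K_{reg}(\mathcal{F})=\min\{K : P(K) \text{ holds}\}$ and $K_{sob}(\mathcal{F})=\min\{K : Q(K) \text{ holds}\}$.

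The single key step is to observe that $P(K)\Rightarrow Q(K)$ for every $K$. This holds because the antecedent of $Q(K)$ contains the antecedent of $P(K)$ as a conjunct: whenever values and derivatives both agree at the $x_i$, in particular the values agree, so $P(K)$ already delivers $f=f_\omega$. Thus any number of points $K$ for which the value-only condition is sufficient for identifiability is also a number for which the value-plus-derivative condition is sufficient.

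Consequently the set $\{K : P(K)\}$ is contained in the set $\{K : Q(K)\}$, and taking minima over nested sets reverses the inclusion to yield $K_{sob}(\mathcal{F})\le K_{reg}(\mathcal{F})$. I would close by dispatching the degenerate cases explicitly: if $K_{reg}(\mathcal{F})=\infty$, meaning no finite sample identifies $\mathcal{F}$ from values alone, the inequality is vacuous; and if $K_{reg}(\mathcal{F})$ is finite, the implication above guarantees that $K_{sob}(\mathcal{F})$ is likewise finite and bounded by it.

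There is no genuine technical obstacle here: the statement is essentially a tautology once the two measures are unpacked, which matches the paper's framing that it ``follows straight from the definition.'' The only point requiring care, and the one I would state explicitly, is the direction in which the inequality flips when passing from the nested inclusion of admissible sets to their minima, together with the handling of a possibly infinite $K_{reg}$, so that the intuition ``imposing more constraints per point can never hurt identifiability'' is converted into the correct inequality between the two sample-complexity quantities.
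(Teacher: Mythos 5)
Your proof is correct and matches the paper's treatment: the paper offers no separate argument, stating only that the inequality follows ``straight from the definition,'' and the implication $P(K)\Rightarrow Q(K)$ (the Sobolev antecedent contains the value-agreement antecedent as a conjunct) together with taking minima over the resulting nested sets is exactly the intended reasoning. Your explicit handling of the inequality's direction and of a possibly infinite $K_{reg}$ is a sound, if optional, addition.
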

\vspace{-0.2cm}
For many families, the above inequality becomes sharp. For example, to determine the coefficients of a polynomial of degree $n$ one needs to compute its values in at least $n+1$ distinct points. If we know values and the derivatives at $k$ points, it is a well-known fact that only $\lceil \frac{n}{2} \rceil$ points suffice to determine all the coefficients. 
We present two more examples in a slightly more formal way. Let $\mathcal{F}_{\rm{G}}$ denote a family of Gaussian PDF-s (parametrised by $\mu$, $\sigma$). Let  $\mathbb{R}^d \supset D = D_1 \cup \ldots \cup D_n$ 
and let $\mathcal{F}_{\rm{PL}}$ be a family of functions from $D_1 \times ... \times D_n$ (Cartesian product of sets $D_i$) to $\mathbb{R}^n$ of form $f(x) = [A_1x_1 + b_1, …, A_nx_n + b_n]$ (linear element-wise)
(Figure~\ref{fig:piecewise} Left).

\begin{prop}
\label{prop::examples}
There holds $K_{sob}\left(\mathcal{F}_{\rm{G}}\right) < K_{reg}(\mathcal{F}_{\rm{G}})$ and $K_{sob}(\mathcal{F}_{\rm{PL}}) < K_{reg}(\mathcal{F}_{\rm{PL}})$.
\end{prop}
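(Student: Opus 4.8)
The plan is to handle the two families separately and, in each case, to compute $K_{reg}$ and $K_{sob}$ explicitly; since Proposition~\ref{prop::} already gives $K_{sob}\le K_{reg}$, it is enough to produce a genuine gap, so I would bound $K_{reg}$ from below and $K_{sob}$ from above.

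\textbf{Gaussians.} I would first show $K_{reg}(\mathcal{F}_{\rm{G}})=3$. Taking logarithms, the coincidence condition $f_{\mu_1,\sigma_1}(x)=f_{\mu_2,\sigma_2}(x)$ reduces to $-\log\sigma_1-\tfrac{(x-\mu_1)^2}{2\sigma_1^2}=-\log\sigma_2-\tfrac{(x-\mu_2)^2}{2\sigma_2^2}$, a polynomial equation in $x$ of degree at most two; hence two distinct Gaussians agree in at most two points, so agreement at any three distinct points forces equality and $K_{reg}\le 3$. For the lower bound I would exhibit a failing pair: two Gaussians with a common mean $\mu$ and different variances coincide exactly at the symmetric pair $\mu\pm\sqrt{u}$ for the unique $u>0$ solving the resulting linear equation in $u=(x-\mu)^2$, so no two points are identifying and $K_{reg}\ge 3$. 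Next I would show $K_{sob}(\mathcal{F}_{\rm{G}})=1$: a single point with value and first derivative already determines the Gaussian. From $v:=f(x_0)>0$ and $w:=f'(x_0)=-v(x_0-\mu)/\sigma^2$ one recovers $s:=w/v=-(x_0-\mu)/\sigma^2$, whence $\mu=x_0+s\sigma^2$; substituting into the value equation gives $\sqrt{2\pi}\,v\,\sigma\,e^{s^2\sigma^2/2}=1$, whose left-hand side is strictly increasing in $\sigma$ on $(0,\infty)$ from $0$ to $\infty$ and therefore has a unique root, pinning down $\sigma$ and then $\mu$. This yields $1=K_{sob}(\mathcal{F}_{\rm{G}})<3=K_{reg}(\mathcal{F}_{\rm{G}})$.

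\textbf{Piecewise-linear maps.} Here I would argue block by block, exploiting that $f$ decouples into independent affine pieces $x_i\mapsto A_ix_i+b_i$, so identifying $f$ is equivalent to identifying each piece. A value sample imposes a single affine constraint on a piece, so recovering its $d+1$ parameters (the row $A_i$ and the offset $b_i$) requires $d+1$ suitably independent inputs, giving $K_{reg}(\mathcal{F}_{\rm{PL}})=d+1$; a value-and-derivative sample instead reads the slope $A_i$ off the derivative directly and then the offset $b_i$ off the value, so one sample suffices and $K_{sob}(\mathcal{F}_{\rm{PL}})=1$. In the one-dimensional $n$-segment reading of Figure~\ref{fig:piecewise} the same bookkeeping gives $n$ versus $2n$. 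As $d\ge 1$ this is a strict inequality.

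The main obstacle is not any individual calculation but getting the counting definitions to bite cleanly, particularly for $\mathcal{F}_{\rm{PL}}$: because its pieces are independent, $K_{reg}$ is sensitive to how a configuration of points distributes across the pieces, and one has to phrase the argument in terms of identifying configurations so as to exclude degenerate placements (samples that probe only some pieces, or that share a coordinate and so never reveal a slope). The robust and conceptually central fact, which drives both halves, is that a derivative observation exposes the local slope outright -- this is exactly what collapses the per-piece cost from $d+1$ value samples to a single Sobolev sample, and the Gaussian from three value points to one.
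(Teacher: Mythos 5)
Your proof is correct and follows essentially the same route as the paper's: for the Gaussians you recover $(\mu,\sigma)$ from a single value--derivative pair exactly as the paper does (via monotonicity of $\sigma\mapsto\sqrt{2\pi}\,v\,\sigma\,e^{s^2\sigma^2/2}$, the paper's strictly decreasing function of $\sigma^2$ in logarithmic form), and for $\mathcal{F}_{\rm{PL}}$ you use the same observation that the Jacobian hands you the slopes $A_i$ outright while values alone need $d+1$ independent probes per block. The one place your route genuinely differs is the lower bound $K_{reg}(\mathcal{F}_{\rm{G}})\ge 3$: the paper exhibits the numerical pair $N(2,1)$ and $N(2.847\ldots,1.641\ldots)$ agreeing at $x=0$ and $x=3$ and appeals to the Implicit Function Theorem for existence, whereas your equal-mean pair intersecting at $\mu\pm\sqrt{u}$ (with $u$ solving a linear equation in $(x-\mu)^2$) is fully explicit, and your degree-two log-difference argument additionally pins down $K_{reg}(\mathcal{F}_{\rm{G}})=3$ exactly, which the paper neither needs nor proves. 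On $\mathcal{F}_{\rm{PL}}$ your account shares the paper's informality about degenerate point placements (which can make $K_{reg}$ infinite under a literal reading of the definition); you at least flag this explicitly, and it is harmless since the strict inequality only needs $K_{sob}=1$ together with $K_{reg}\ge 2$.
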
\vspace{-0.2cm}


This result relates to Deep ReLU networks as they  build a hyperplanes-based model of the target function. If those were parametrised independently  one could expect a reduction of sample complexity by $d+1$ times, where $d$ is the dimension of the function domain. In practice parameters of hyperplanes in such networks are not independent, furthermore the hinges positions change so the Proposition cannot be directly applied, but it can be seen as an intuitive way to see why the sample complexity drops significantly for Deep ReLU networks too.
\parshrinky
\section{Experimental Results}
\label{sec:experiments}
\seckiny
We consider three domains where information about derivatives is available during training\footnote{All experiments were performed using TensorFlow~\cite{abadi2016tensorflow} and the Sonnet neural network library~\cite{sonnet}.}.

\parshrinky
\subsection{Artificial Data}
\seckiny
First, we consider the task of regression on a set of well known low-dimensional functions used for benchmarking optimisation methods. 

We train two hidden layer neural networks with 256 hidden units per layer with ReLU activations to regress towards function values, and verify generalisation capabilities by evaluating the mean squared error on a hold-out test set. Since the task is standard regression, we choose all the losses of Sobolev Training to be L2 errors, and use a first order Sobolev method (second order derivatives of ReLU networks with a linear output layer are constant, zero). The optimisation is therefore:
$$
\min_\theta \tfrac{1}{N} \sum_{i=1}^N \| f(x_i) - m(x_i|\theta) \|^2_2 + \| \nabla_x f(x_i) - \nabla_x m(x_i|\theta)\|_2^2.
$$
\begin{figure}[t]
\centering
\begin{tabular}{c|cc|cc}
Dataset &\multicolumn{2}{c}{20 training samples}&\multicolumn{2}{c}{100 training samples}
\\
\includegraphics[width=0.18\textwidth]{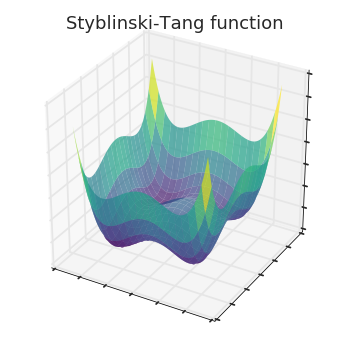}&
\includegraphics[width=0.18\textwidth]{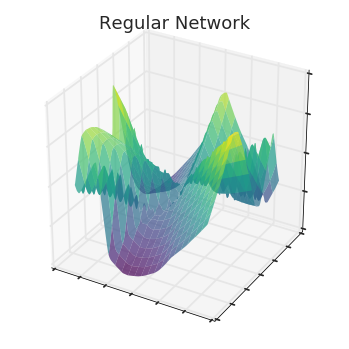}&
\includegraphics[width=0.18\textwidth]{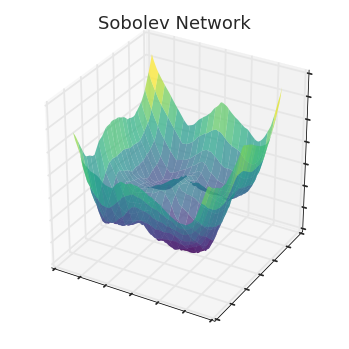}&
\includegraphics[width=0.18\textwidth]{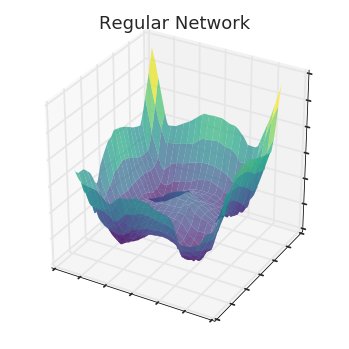}&
\includegraphics[width=0.18\textwidth]{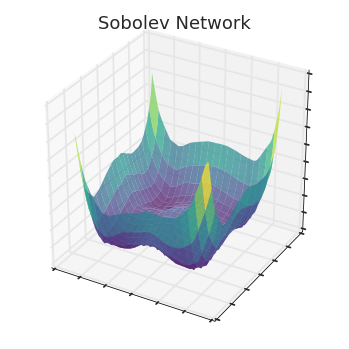}
\\
& Regular & Sobolev & Regular & Sobolev
\\
\includegraphics[width=0.18\textwidth]{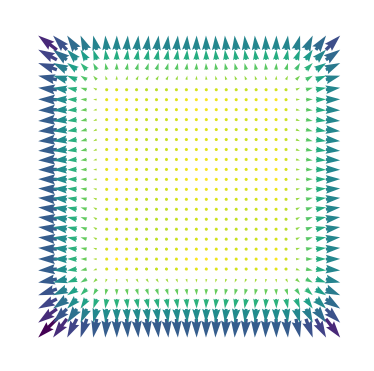}&
\includegraphics[width=0.18\textwidth]{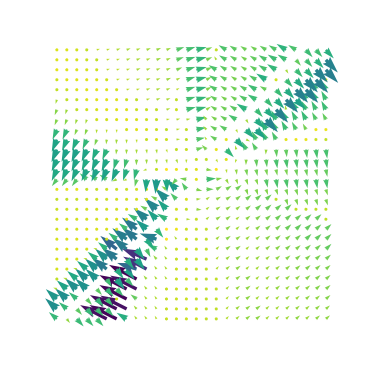}&
\includegraphics[width=0.18\textwidth]{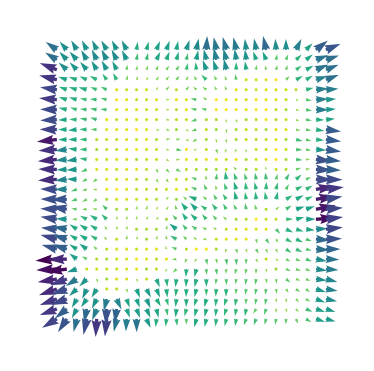}&
\includegraphics[width=0.18\textwidth]{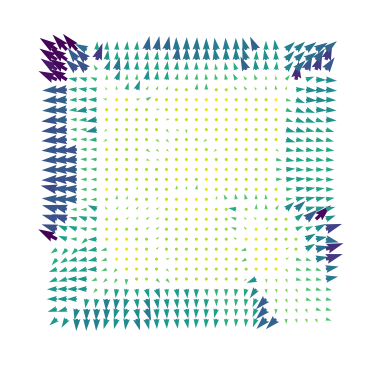}&
\includegraphics[width=0.18\textwidth]{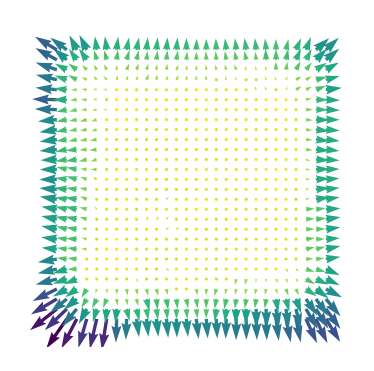}\\
\end{tabular}
\caption{Styblinski-Tang function (on the left) and its models using regular neural network training (left part of each plot) and Sobolev Training (right part). We also plot the vector field of the gradients of each predictor underneath the function plot.}
\label{fig:toy_reg_data}
\end{figure}
Figure~\ref{fig:toy_reg} right shows the results for the optimisation benchmarks.
As expected, Sobolev trained networks perform extremely well -- for six out of seven benchmark problems they significantly reduce the testing error with the obtained errors orders of magnitude smaller than the corresponding errors of the regularly trained networks. 
%
%
%
The stark difference in approximation error is highlighted in Figure~\ref{fig:toy_reg_data}, where we show the Styblinski-Tang function and its approximations with both regular and Sobolev Training. It is clear that even in very low data regimes, the Sobolev trained networks can capture the functional shape.

Looking at the results, we make two important observations. First, the effect of Sobolev Training is stronger in low-data regimes, however it does not disappear even in the high data regime, when one has 10,000 training examples for training a two-dimensional function. Second, the only case where regular regression performed better is the regression towards Ackley's function. This particular example was chosen to show that one possible weak point of our approach might be approximating functions with a very high frequency signal component in the relatively low data regime. Ackley's function is composed of exponents of high frequency cosine waves, thus creating an extremely bumpy surface, consequently a method that tries to match the derivatives can behave badly during testing if one does not have enough data to capture this complexity. However, once we have enough training data points, Sobolev trained networks are able to approximate this function better.


\parshrinky
\subsection{Distillation}
\seckiny
Another possible application of Sobolev Training is to perform model distillation. 
This technique has many applications, such as network compression~\cite{sau2016deep}, ensemble merging~\cite{hinton2015distilling}, or more recently policy distillation in reinforcement learning~\cite{rusu2015policy}.

We focus here on a task of distilling a policy. We aim to distill a target policy $\pi^*(s)$ -- a trained neural network which outputs a probability distribution over actions -- into a smaller neural network $\pi(s|\theta)$, such that the two policies $\pi^*$ and $\pi$ have the same behaviour. 
In practice this is often done by minimising an expected divergence measure between $\pi^*$ and $\pi$, for example, the Kullback–Leibler divergence $D_{KL}(\pi(s) \| \pi^*(s))$, over states gathered while following $\pi^*$. Since policies are multivariate functions, direct application of Sobolev Training would mean producing full Jacobian matrices with respect to the $s$, which for large actions spaces is computationally expensive. To avoid this issue we employ a stochastic approximation described in Section~\ref{sec:model}, thus resulting in the objective
$$
\min_\theta  D_{KL}(\pi(s|\theta) \| \pi^*(s)) +
\alpha \mathbb{E}_{v}\left [ \| \nabla_s \langle \log \pi^*(s) , v \rangle - \nabla_s \langle \log \pi(s|\theta) , v \rangle \| \right ],
$$
where the expectation is taken with respect to $v$ coming from a uniform distribution over the unit sphere, and Monte Carlo sampling is used to approximate it. 

As target policies $\pi^*$, we use agents playing Atari games~\cite{mnih2013playing} that have been trained with A3C~\cite{mnih2016asynchronous} on three well known games: Pong, Breakout and Space Invaders. The agent's policy is a neural network consisting of 3 layers of convolutions followed by two fully-connected layers, which we distill to a smaller network with 2 convolutional layers and a single smaller fully-connected layer (see SM for details). Distillation is treated here as a purely supervised learning problem, as our aim is not to re-evaluate known distillation techniques, but rather to show that if the aim is to minimise a given divergence measure, we can improve distillation using Sobolev Training.
\begin{figure}[t]
\begin{tabular}{cc}
Test action prediction error & Test $D_{KL}$ \\
\includegraphics[width=0.485\textwidth]{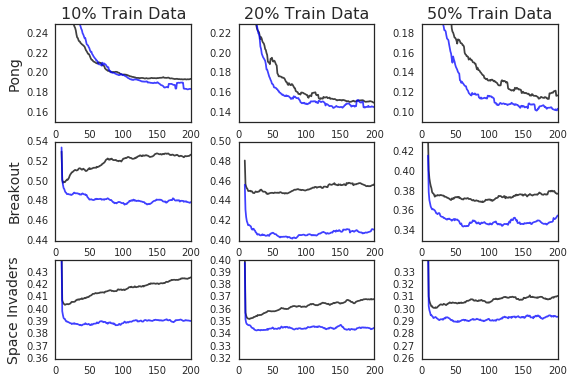}&
\includegraphics[width=0.485\textwidth]{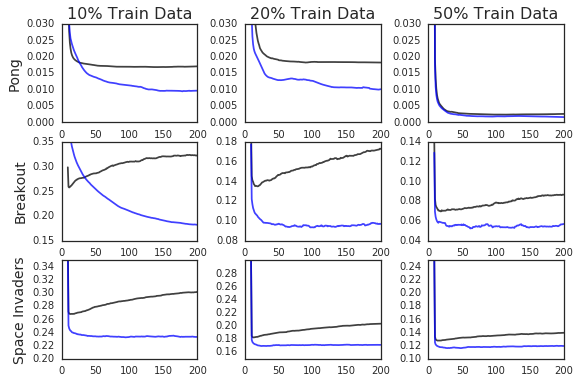}\\
\multicolumn{2}{c}{
\tikz\draw[white,fill=black!90] (0,0) circle (.5ex);  Regular distillation \;\; \tikz\draw[white,fill=blue!90] (0,0) circle (.5ex);  Sobolev distillation}
\end{tabular}
\caption{Test results of distillation of RL agents on three Atari games. Reported test action prediction error (left) is the error of the most probable action predicted between the distilled policy and target policy, and test D$_{KL}$ (right) is the Kulblack-Leibler divergence between policies. Numbers in the column title represents the percentage of the 100K recorded states used for training (the remaining are used for testing). In all scenarios the Sobolev distilled networks are significantly more similar to the target policy.}
\label{fig:distill}
\end{figure}
Figure~\ref{fig:distill} shows test error during training with and without Sobolev Training\footnote{Testing is performed on a held out set of episodes, thus there are no temporal nor causal relations between training and testing}. The introduction of Sobolev Training leads to similar effects as in the previous section -- the network generalises much more effectively, and this is especially true in low data regimes. Note the performance gap on Pong is  small due to the fact that optimal policy is quite degenerate for this game\footnote{For majority of the time the policy in Pong is uniform, since actions taken when the ball is far away from the player do not matter at all. Only in crucial situations it peaks so the ball hits the paddle.}. In all remaining games one can see a significant performance increase from using our proposed method, and as well as minor to no overfitting.

Despite looking like a regularisation effect, we stress that Sobolev Training is not trying to find the simplest models for data or suppress the expressivity of the model. This training method aims at matching the original function's smoothness/complexity and so reduces overfitting by effectively extending the information content of the training set, rather than by imposing a data-independent prior as with regularisation.

\parshrinky
\subsection{Synthetic Gradients}
\seckiny
\begin{table}[t]
\caption{Various techniques for producing synthetic gradients. 
Green shaded nodes denote nodes that get supervision from the corresponding object from the main network (gradient or loss value).
We report accuracy on the test set $\pm$ standard deviation. Backpropagation results are given in parenthesis.
}
\begin{tabular}{l}
\toprule
\hspace{1.8cm}
\includegraphics[height=4cm]{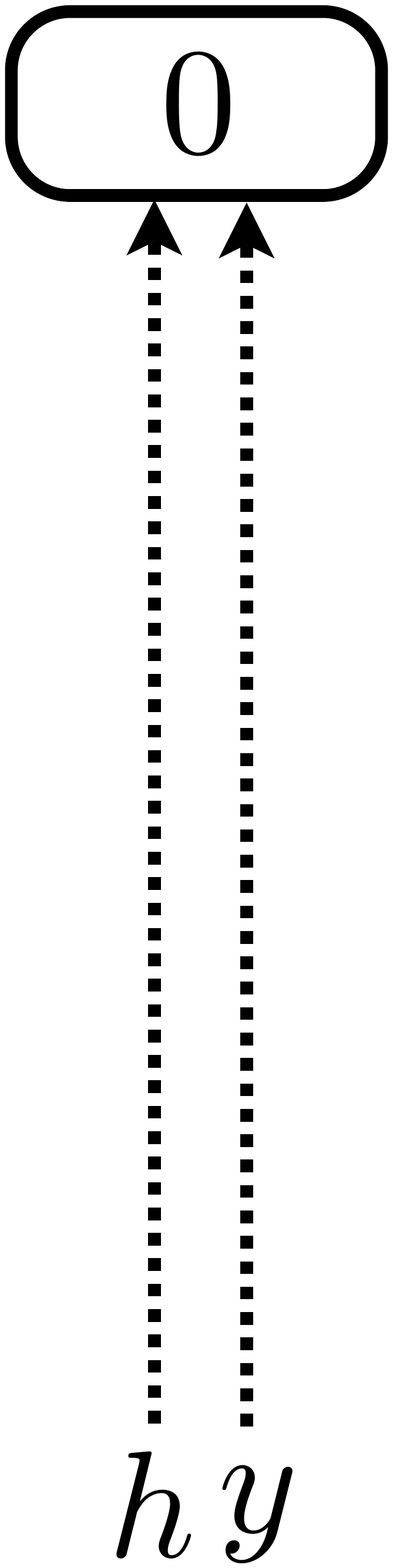}\hspace{-1.25cm}
\includegraphics[height=4cm]{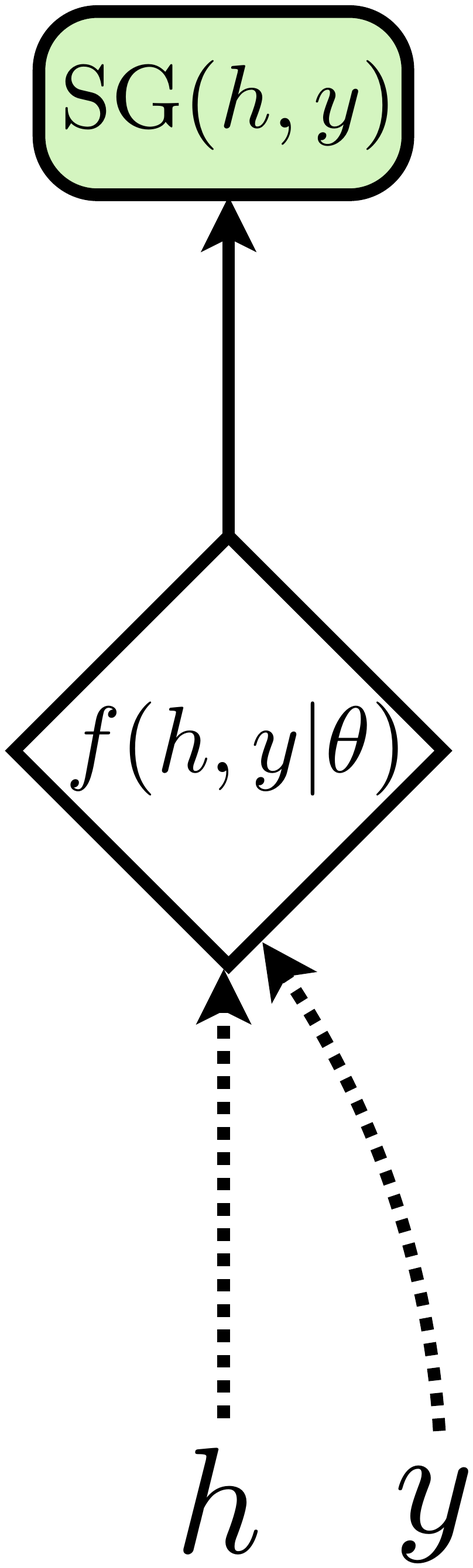}\hspace{-0.25cm}
\includegraphics[height=4cm]{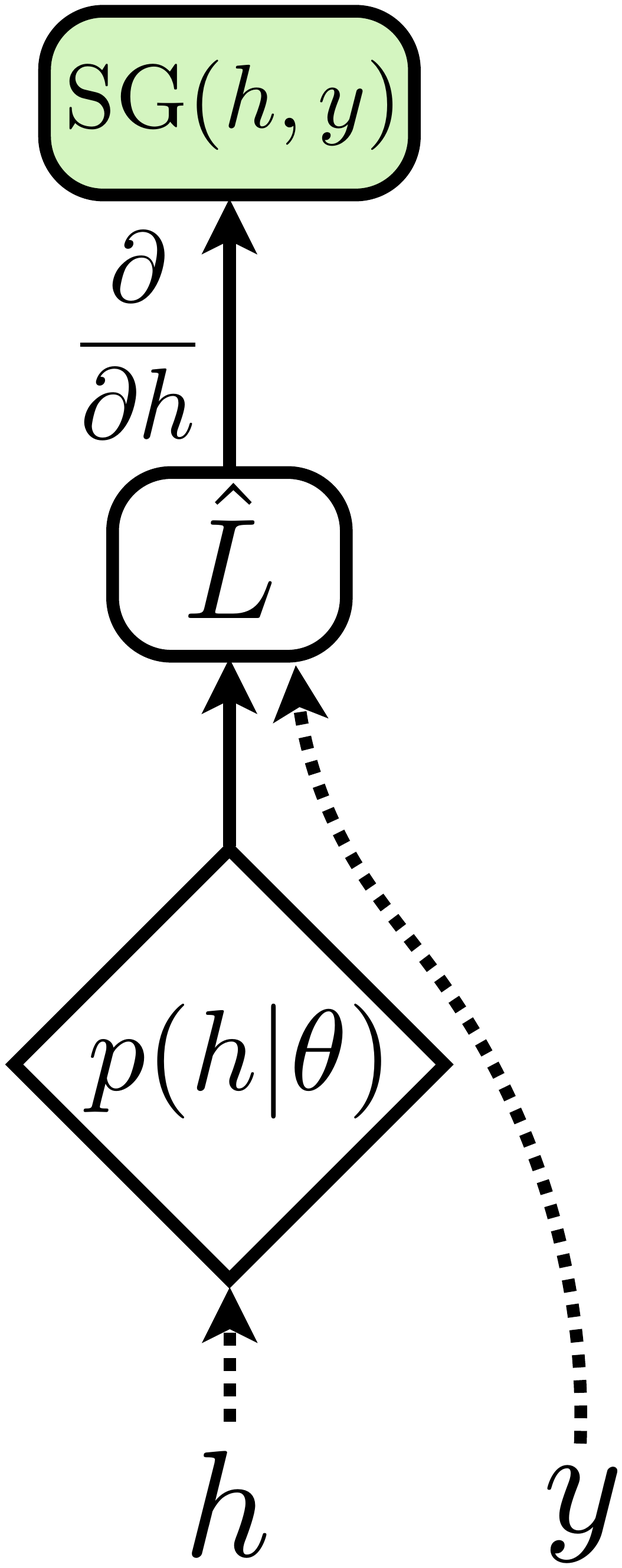}\hspace{-0.75cm}
\includegraphics[height=4cm]{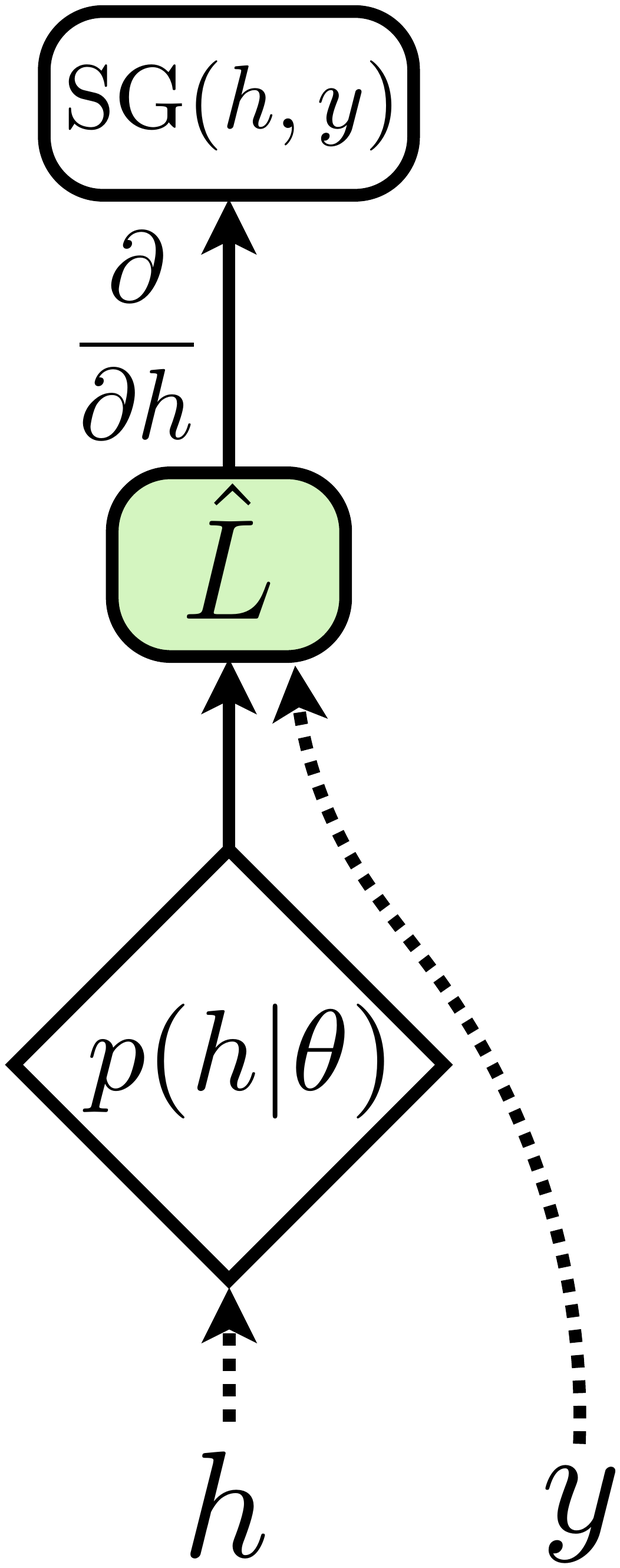}\hspace{-0.75cm}
\includegraphics[height=4cm]{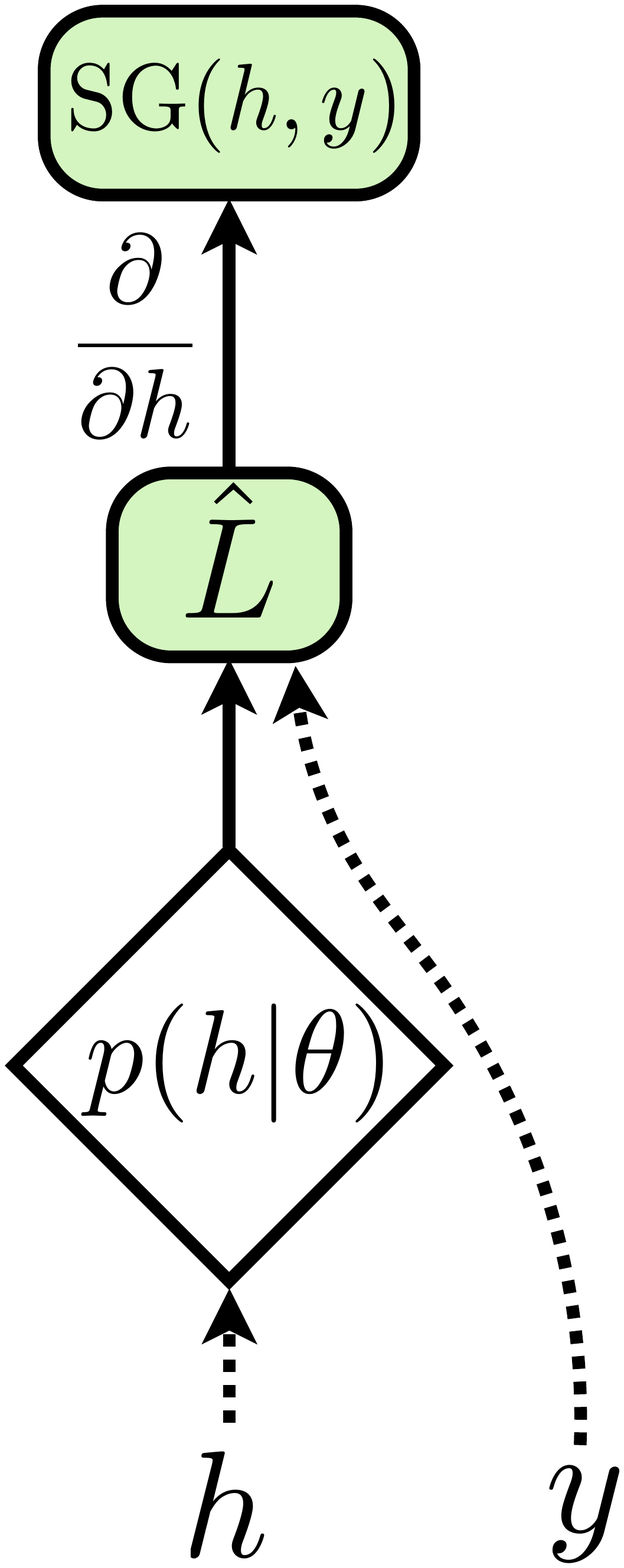}\\
\end{tabular}
\begin{tabular}{L{2.0cm}C{1.8cm}C{2.1cm}C{2cm}C{2cm}C{2cm}}
\midrule
~ & Noprop & Direct SG~\cite{jaderberg2016decoupled} &  VFBN~\cite{vfbn} & Critic & Sobolev\\
\midrule
\multicolumn{6}{l}{\bf CIFAR-10 \small{\bf with 3 synthetic gradient modules}}\\
Top 1 (94.3\%)  & ~54.5\% \scriptsize{$\pm 1.15$}& 79.2\% \scriptsize{$\pm 0.01$} & 88.5\% \scriptsize{$\pm 2.70$} & 93.2\%  \scriptsize{$\pm 0.02$}& 93.5\%  \scriptsize{$\pm 0.01$}\\
\midrule
\multicolumn{6}{l}{\bf ImageNet \small{\bf with 1 synthetic gradient module}} \\
Top 1 (75.0\%)  & ~54.0\% \scriptsize{$\pm 0.29$} & - & 57.9\%  \scriptsize{$\pm 2.03$} & 71.7\%  \scriptsize{$\pm 0.23$} & 72.0\%  \scriptsize{$\pm 0.05$} \\
Top 5 (92.3\%) & ~77.3\%  \scriptsize{$\pm 0.06$} & - & 81.5\%  \scriptsize{$\pm 1.20$} & 90.5\%  \scriptsize{$\pm 0.15$}& 90.8\%  \scriptsize{$\pm 0.01$} \\
\midrule
\multicolumn{6}{l}{\bf ImageNet \small{\bf with 3 synthetic gradient modules}}\\
Top 1 (75.0\%)  & 18.7\%   \scriptsize{$\pm 0.18$}  & - & 28.3\%   \scriptsize{$\pm 5.24$}  & 65.7\%   \scriptsize{$\pm 0.56$} & 66.5\%   \scriptsize{$\pm 0.22$}\\
Top 5 (92.3\%) & 38.0\%   \scriptsize{$\pm 0.34$}  & - & 52.9\%   \scriptsize{$\pm 6.62$}  & 86.9\%   \scriptsize{$\pm 0.33$} & 87.4\%   \scriptsize{$\pm 0.11$} \\
\bottomrule
\end{tabular}
\label{tab:unification}
\label{tab:imagenet}
\end{table}
The previous experiments have shown how information about the derivatives can boost approximating function values. However, the core idea of Sobolev Training is broader than that, and can be employed in both directions. Namely, if one ultimately cares about approximating derivatives, then additionally approximating values can help this process too. One recent technique, which requires a model of gradients is Synthetic Gradients (SG)~\cite{jaderberg2016decoupled} -- a method for training complex neural networks in a decoupled, asynchronous fashion. In this section we show how we can use Sobolev Training for SG.

The principle behind SG is that instead of doing full backpropagation using the chain-rule, one splits a network into two (or more) parts, and approximates partial derivatives of the loss $L$ with respect to some hidden layer activations $h$ with a trainable function $SG(h, y | \theta)$. In other words, given that network parameters up to $h$ are denoted by $\Theta$
$$
\frac{\partial L}{\partial \Theta} = \frac{\partial L}{\partial h}\frac{\partial h}{\partial \Theta} \approx SG(h, y|\theta)\frac{\partial h}{\partial \Theta}.
$$
In the original SG paper, this module is trained to minimise
$
L_{SG}(\theta) = \left \| SG(h, y|\theta) - \tfrac{\partial L(p_h,y)}{\partial h} \right \|^2_2,
$
where $p_h$ is the final prediction of the main network for hidden activations $h$. For the case of learning a classifier, in order to apply Sobolev Training in this context we construct a loss predictor, composed of a class predictor $p(\cdot|\theta)$ 
followed by the log loss, which gets supervision from the true loss, and the gradient of the prediction gets supervision from the true gradient:
$$
m(h, y | \theta) := L(p(h| \theta),y), \;\;\;\; SG(h, y | \theta) := \partial m(h, y | \theta) / \partial h,
$$
$$
L_{SG}^{sob}(\theta) 
=  \ell( m(h, y | \theta) , L(p_h, y)) ) + \ell_1 \left ( \tfrac{\partial m(h, y | \theta)}{\partial h} , \tfrac{\partial L(p_h,y)}{\partial h} \right ).
$$
In the Sobolev Training framework, the target function is the loss of the main network $L(p_h, y)$ for which we train a model $m(h, y | \theta)$ to approximate, and in addition ensure that the model's derivatives $\partial m(h, y | \theta)/{\partial h}$ are matched to the true derivatives $\partial L(p_h,y)/\partial h$. The model's derivatives $\partial m(h, y | \theta)/\partial h$ are used as the synthetic gradient to decouple the main network.

This setting closely resembles what is known in reinforcement learning as critic methods~\cite{konda1999actor}. In particular, if we do not provide supervision on the gradient part, we end up with a loss critic.
Similarly if we do not provide supervision at the loss level, but only on the gradient component, we end up in a method that resembles VFBN~\cite{vfbn}. In light of these connections, our approach in this application setting can be seen as a generalisation and unification of several existing ones (see Table~\ref{tab:unification} for illustrations of these approaches).

We perform experiments on decoupling deep convolutional neural network image classifiers using synthetic gradients produced by loss critics that are trained with Sobolev Training, and compare to regular loss critic training, and regular synthetic gradient training. We report results on CIFAR-10 for three network splits (and therefore three synthetic gradient modules) and on ImageNet with one and three network splits
\footnote{N.b. the experiments presented use learning rates, annealing schedule, etc. optimised to maximise the backpropagation baseline, rather than the synthetic gradient decoupled result (details in the SM). }.

The results are shown in Table~\ref{tab:imagenet}. With a naive SG model, 
we obtain 79.2\% test accuracy on CIFAR-10. Using an SG architecture which resembles a small version of the rest of the model makes learning much easier and led to 88.5\% accuracy, while Sobolev Training achieves 93.5\% final performance. 
The regular critic also trains well, achieving 93.2\%, as the critic forces the lower part of the network to provide a representation which it can use to reduce the classification (and not just prediction) error. Consequently it provides a learning signal which is well aligned with the main optimisation. However, this can lead to building representations which are suboptimal for the rest of the network. Adding additional gradient supervision by constructing our Sobolev SG module avoids this issue by making sure that synthetic gradients are truly aligned and gives an additional boost to the final accuracy.

For ImageNet~\cite{deng2009imagenet} experiments based on ResNet50~\cite{he2016deep}, we obtain qualitatively similar results.
Due to the complexity of the model and an almost 40\% gap between no backpropagation and full backpropagation results, the difference between methods with vs without loss supervision grows significantly. This suggests that at least for ResNet-like architectures, 
loss supervision is a crucial component of a SG module. After splitting ResNet50 into four parts the Sobolev SG achieves 87.4\% top 5 accuracy, while the regular critic SG achieves 86.9\%, confirming our claim about suboptimal representation being enforced by gradients from a regular critic. Sobolev Training results were also much more reliable in all experiments (significantly smaller standard deviation of the results).
\parshrinky
\section{Discussion and Conclusion}
\seckiny
In this paper we have introduced Sobolev Training for neural networks -- a simple and effective way of incorporating knowledge about derivatives of a target function into the training of a neural network function approximator. We provided theoretical justification that encoding both a target function's value as well as its derivatives within a ReLU neural network is possible, and that this results in more data efficient learning. Additionally, we show that our proposal can be efficiently trained using stochastic approximations if computationally expensive Jacobians or Hessians are encountered.

In addition to toy experiments which validate our theoretical claims, we performed experiments to highlight two very promising areas of applications for such models: one being distillation/compression of models; the other being the application to various meta-optimisation techniques that build models of other models dynamics (such as synthetic gradients, learning-to-learn, etc.). In both cases we obtain significant improvement over classical techniques, and we believe there are many other application domains in which our proposal should give a solid performance boost.

In this work we focused on encoding true derivatives in the corresponding ones of the neural network. Another possibility for future work is to encode information which one believes to be highly correlated with derivatives. For example curvature~\cite{pascanu2013revisiting} is believed to be connected to uncertainty. Therefore, 
given a problem with known uncertainty at training points, one could use Sobolev Training to match the second order signal to the provided uncertainty signal. 
Finite differences can also be used to approximate gradients for black box target functions, which could help when, for example, learning a generative temporal model.
Another unexplored path would be to apply Sobolev Training to \emph{internal derivatives} rather than just derivatives with respect to the inputs.

\footnotesize

\bibliographystyle{plain}
\bibliography{sobolev}



\newpage

\section*{Supplementary Materials for ``Sobolev Training for Neural Networks''}
\setcounter{section}{0}

\section{Proofs}

\setcounter{thm}{0}

\begin{thm}
\label{thm::sobolev_approximation}
Let $f$ be a  $\Cj$ function on a compact set. Then, for every positive $\vep$ there exists a single hidden layer neural network with a ReLU (or a leaky ReLU) activation which approximates $f$ in Sobolev space $\mathcal{S}_1$ up to $\epsilon$ error.
\end{thm}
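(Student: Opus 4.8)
My plan is to \emph{reduce} the ReLU case to Hornik's sigmoidal result by approximating, in $\Cj$, each sigmoidal hidden unit by a small block of ReLU units. Since the sigmoid $\ph$ is non-constant, bounded, continuous, and has a continuous derivative, Hornik's theorem applies with $K=1$: there is a single-hidden-layer sigmoidal network $s(x) = \sum_{i=1}^n a_i \ph(w_i^\top x + b_i) + c$ with $\|f - s\|_{\mathcal{S}_1} < \vep/2$. It therefore suffices to construct a single-hidden-layer ReLU network $m$ with $\|s - m\|_{\mathcal{S}_1} < \vep/2$ and conclude by the triangle inequality.

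The core is a one-dimensional lemma: for every $\delta > 0$ there is a univariate single-hidden-layer ReLU network $\psi$ with $\|\psi - \ph\|_\infty < \delta$ and $\|\psi' - \ph'\|_\infty < \delta$ on any fixed compact interval $I$. I would build it as follows. Because $f$ lives on a compact set, each pre-activation $t \mapsto w_i^\top x + b_i$ ranges over a compact interval, so I fix an $I$ containing all of them. Since $\ph \in \Cj$, $\ph'$ is uniformly continuous on $I$; I partition $I$ by knots $t_0 < \dots < t_L$ finely enough that $\ph'$ varies by less than $\delta$ on each subinterval, and let $\psi$ be the continuous piecewise-linear function whose slope on the $k$-th piece equals $\ph'$ at the left knot. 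Such a function is exactly $\psi(t) = s_0 t + \sum_{k} (s_k - s_{k-1}) \sigma(t - t_k) + \mathrm{const}$, a single hidden layer of ReLUs (the linear term being $\sigma(t) - \sigma(-t)$). By construction $\|\psi' - \ph'\|_\infty < \delta$, and matching $\psi$ to $\ph$ at one knot gives $\|\psi - \ph\|_\infty \le \delta\,|I|$ by integrating the derivative error. For leaky ReLU the construction is identical after rescaling the jump coefficients, since leaky ReLU also produces continuous piecewise-linear functions with prescribable hinge slopes.

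Next I would transport this back to $d$ dimensions. Replacing $\ph(w_i^\top x + b_i)$ by $\psi(w_i^\top x + b_i)$ keeps the architecture a single hidden layer: each hinge of $\psi$ becomes one ReLU unit $\sigma(w_i^\top x + (b_i - t_k))$ sharing the direction $w_i$. The key point is that this substitution controls both Sobolev components simultaneously: $|\psi(w_i^\top x + b_i) - \ph(w_i^\top x + b_i)| < \delta$, while the gradient difference is $(\psi' - \ph')(w_i^\top x + b_i)\,w_i$, of norm at most $\delta\,\|w_i\|$. Summing over the $n$ units weighted by $|a_i|$ yields $\|s - m\|_{\mathcal{S}_1} \le \delta \sum_{i=1}^n |a_i|\,(1 + \|w_i\|)$; choosing $\delta$ small enough makes this $< \vep/2$, completing the argument.

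The step I expect to be the main obstacle is the simultaneous control of value \emph{and} derivative in the one-dimensional lemma together with its clean propagation through the affine pre-activations: in particular, verifying that matching a piecewise-constant approximation of $\ph'$ genuinely yields uniform control of $\psi - \ph$ (and not merely of $\psi' - \ph'$), and checking that Hornik's hypotheses hold so the reduction is legitimate. The pleasant feature of this route is that all genuinely multivariate difficulty is absorbed by Hornik's theorem, leaving only an elementary univariate $\Cj$ approximation to carry out.
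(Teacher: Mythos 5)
Your proposal is correct and follows essentially the same route as the paper: invoke Hornik's Sobolev universal approximation for sigmoid networks, then replace each sigmoidal unit by a piecewise-linear ReLU combination that approximates the sigmoid in both value and derivative, and conclude by the triangle inequality. If anything, your version is slightly more explicit than the paper's in tracking how the per-unit error $\delta$ propagates through the output weights $|a_i|$ and input weights $\|w_i\|$ into the final Sobolev bound, a bookkeeping step the paper leaves implicit.
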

We start with a definition. We will say that a function $p$ on a set $D$ is {\it piecewise-linear}, if there exist $D_1, \ldots, D_n$ such that $D = D_1 \cup \ldots \cup D_n = D$ and $p|_{D_i}$ is linear for every $i = 1,\ldots,n$ (note, that we assume finiteness in the definition).

\begin{lem}
\label{lem::first_technical}
Let $D$ be a compact subset of $\mathbb{R}$ and let $\ph \in \Cj(D)$. Then, for every $\vep > 0$ there exists a piecewise-linear, continuous function $p :D \rightarrow \mathbb{R}$ such that $|\ph(x) - p(x)| < \vep$ for every $x \in D$ and $|\ph '(x) - p'(x)| < \vep$ for every $x \in D \setminus{P}$, where $P$ is the set of points of non-differentiability of $p$.
\end{lem}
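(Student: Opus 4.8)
The plan is to approximate $\ph$ by its continuous piecewise-linear interpolant on a sufficiently fine partition, and to control the derivative error via the mean value theorem together with the uniform continuity of $\ph'$. First I would reduce to the case of an interval: since $D$ is compact it is contained in some closed interval $[a,b]$, and because $\ph \in \Cj(D)$ I would extend it to a $\Cj$ function on all of $[a,b]$ (keeping the name $\ph$); any bound established on $[a,b]$ then restricts to $D$. This reduction is harmless, since the class of continuous piecewise-linear functions and the two error quantities only grow when we pass to the larger domain.

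Next I would exploit the regularity of $\ph$. Because $\ph'$ is continuous on the compact set $[a,b]$ it is uniformly continuous, so given $\vep > 0$ there is some $\delta \in (0,1)$ with $|\ph'(s) - \ph'(t)| < \vep$ whenever $|s - t| < \delta$. I would then fix a partition $a = x_0 < x_1 < \cdots < x_n = b$ of mesh $\max_i (x_{i+1} - x_i) < \delta$, and define $p$ to be the unique continuous function agreeing with $\ph$ at every node $x_i$ and linear on each $[x_i, x_{i+1}]$. By construction $p$ is piecewise-linear with finitely many pieces, and its set $P$ of points of non-differentiability is contained in the finite set $\{x_1, \dots, x_{n-1}\}$.

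The derivative estimate is the heart of the argument. On the interior of each segment, $p'$ is the constant slope $\frac{\ph(x_{i+1}) - \ph(x_i)}{x_{i+1} - x_i}$, which by the mean value theorem equals $\ph'(\xi_i)$ for some $\xi_i \in (x_i, x_{i+1})$; since any $x$ in that segment satisfies $|x - \xi_i| < \delta$, uniform continuity yields $|p'(x) - \ph'(x)| = |\ph'(\xi_i) - \ph'(x)| < \vep$ for every $x \in [a,b] \setminus P$, and in particular on $D \setminus P$. For the value estimate I would integrate: on $[x_i, x_{i+1}]$ we have $\ph(x) - p(x) = \int_{x_i}^{x} (\ph'(t) - p'(t))\,dt$, because $\ph$ and $p$ agree at $x_i$, so the derivative bound gives $|\ph(x) - p(x)| \le \vep\,(x - x_i) < \vep\delta < \vep$.

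All the steps are elementary; the only genuine delicacy I anticipate is the first one — interpreting $\Cj(D)$ and legitimately passing to an interval when $D$ need not itself be an interval (e.g. isolated points or a Cantor-type set) — which I would handle by invoking the standard $\Cj$-extension onto $[a,b]$. Everything after that is the classical linear-interpolation estimate. This lemma is precisely the reduction needed for Theorem 1, since one-dimensional continuous piecewise-linear functions are exactly those realizable by a single-hidden-layer ReLU network, so matching $\ph$ and $\ph'$ by such a $p$ immediately gives Sobolev approximation by a ReLU network.
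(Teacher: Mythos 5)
Your proof is correct and follows essentially the same route as the paper's: a linear interpolant on a partition of mesh below the uniform-continuity modulus of $\ph'$, with the mean value theorem identifying the segment slope with $\ph'(\xi_i)$ to get the derivative bound. The only (cosmetic) differences are that you obtain the value estimate by integrating the derivative error rather than via the paper's triangle-inequality argument with a Lipschitz bound on $\ph$, and that you explicitly extend $\ph$ to an ambient interval $[a,b]$ --- a point the paper glosses over, since its interpolation nodes $\xi_i$ need not lie in $D$.
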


\begin{proof}
By assumption, the function $\ph '$ is continuous on $D$. Every continuous function on a compact set has to be uniformly continuous. Therefore, there exists $\delta_1$ such that for every $x_1$, $x_2$, with $|x_1 - x_2| < \delta_1$ there holds $| \ph '(x_1) - \ph '(x_2) | < \vep$. Moreover, $\ph'$ has to be bounded. Let $M$ denote $\sup\limits_{x} |\ph'(x)|$. By Mean Value Theorem, if $|x_1 - x_2| < \frac{\vep}{2 M}$ then $| \ph(x_1) - \ph(x_2) | < \frac{\vep}{2}$. Let $\delta =\min\left\{\delta_1, \frac{\vep}{2 M} \right\}$. Let $\xi_i$, $i=0,\ldots,N$ be a sequence satisfying: $\xi_i < \xi_j$ for $i<j$, $|\xi_i - \xi_{i-1}| < \delta$ for $i=1,\ldots,N$ and $\xi_0 < x < \xi_N$ for all $x \in D$. Such sequence obviously exists, because $D$ is a compact (and thus bounded)  subset of $\mathbb{R}$.
We define 
\[
p(x) = \ph(\xi_{i-1}) + \frac{\ph(\xi_i) - \ph(\xi_{i-1})}{\xi_i-\xi_{i-1}}(x-\xi_{i-1}) \;\;\; \rm{for} \;\;\; x \in [\xi_{i-1}, \xi_i] \cap D.
\]
It can be easily verified, that it has all the desired properties. Indeed, let $x \in D$. Let $i$ be such that  $\xi_{i-1} \le x \le \xi_i$. Then $|\ph(x) - p(x)| = |\ph(x) - \ph(\xi_i) + p(\xi_i) - p(x)| \le |\ph(x) - \ph(\xi_i)| + |p(\xi_i) - p(x)| \le \vep$, as $\ph(\xi_i) = p(\xi_i)$ and  $|\xi_i - x| \le |\xi_i - \xi_{i-1}| < \delta$ by definitions. Moreover, applying Mean Value Theorem we get that there exists $\zeta \in [\xi_{i-1}, \xi_i]$ such that $\ph'(\zeta) = \frac{\ph(\xi_i) - \ph(\xi_{i-1})}{\xi_i-\xi_{i-1}} = p'(\zeta)$. Thus, $|\ph' (x) - p' (x)| = |\ph' (x) - \ph ' (\zeta) + p' (\zeta) - p'(x)| \le |\ph' (x) - \ph(\zeta)| + |p' (\zeta) - p' (x)| \le \vep$ as $p'(\zeta) = p'(x)$ and $|\zeta - x| < \delta$.

\end{proof}

\begin{lem}
\label{lem::second_technical}
Let $\ph \in \Cj(\mathbb{R})$ have finite limits $\lim\limits_{x \rightarrow -\infty} \ph(x) = \ph_{-}$ and $ \lim\limits_{x \rightarrow \infty} \ph(x) = \ph_{+}$, and let $\lim\limits_{x \rightarrow -\infty} \ph'(x) = \lim\limits_{x \rightarrow \infty} \ph'(x) = 0$. Then, for every $\vep > 0$ there exists a piecewise-linear, continuous function $p :\mathbb{R} \rightarrow \mathbb{R}$ such that $|\ph(x) - p(x)| < \vep$ for every $x \in \mathbb{R}$ and $|\ph '(x) - p'(x)| < \vep$ for every $x \in \mathbb{R} \setminus{P}$, where $P$ is the set of points of non-differentiability of $p$.
\end{lem}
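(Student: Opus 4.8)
The plan is to reduce to the compact case already settled in Lemma~\ref{lem::first_technical}. The two limit hypotheses are exactly what make the unbounded tails harmless: far to the left and right $\ph$ is nearly constant and $\ph'$ is nearly zero, so extending a compact-interval approximation by constants on the tails will match both the value and the derivative of $\ph$ to within any prescribed tolerance.

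First I would use the limits to localise to a compact interval. Given $\vep > 0$, pick $R > 0$ so large that $|\ph'(x)| < \vep$ for all $|x| \ge R$ (possible since $\ph' \to 0$ at $\pm\infty$), while also $|\ph(x) - \ph_+| < \tfrac{\vep}{4}$ for $x \ge R$ and $|\ph(x) - \ph_-| < \tfrac{\vep}{4}$ for $x \le -R$ (possible since $\ph$ has finite limits $\ph_\pm$). Applying Lemma~\ref{lem::first_technical} on the compact set $D = [-R, R]$ with accuracy $\tfrac{\vep}{2}$ yields a continuous piecewise-linear $p_0$ with $|\ph(x) - p_0(x)| < \tfrac{\vep}{2}$ on $[-R,R]$ and $|\ph'(x) - p_0'(x)| < \tfrac{\vep}{2}$ off its finite set of non-differentiability points.

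Next I would extend $p_0$ to all of $\mathbb{R}$ by constants, setting $p(x) = p_0(-R)$ for $x < -R$, $p(x) = p_0(x)$ for $-R \le x \le R$, and $p(x) = p_0(R)$ for $x > R$. This $p$ is continuous (the constant pieces agree with $p_0$ at $\pm R$) and is piecewise-linear in the paper's finite sense, since we merely appended two linear (slope-zero) pieces to the finitely many pieces of $p_0$. The bounds on $[-R,R]$ are inherited directly from Lemma~\ref{lem::first_technical}. On the right tail $x > R$ we have $p'(x) = 0$, so $|\ph'(x) - p'(x)| = |\ph'(x)| < \vep$, while the triangle inequality gives $|\ph(x) - p(x)| \le |\ph(x) - \ph_+| + |\ph_+ - \ph(R)| + |\ph(R) - p_0(R)| < \tfrac{\vep}{4} + \tfrac{\vep}{4} + \tfrac{\vep}{2} = \vep$; the left tail is symmetric. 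The new corners at $\pm R$ are points of non-differentiability and hence belong to $P$, so no derivative estimate is required there.

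The main obstacle here is not conceptual but purely a matter of bookkeeping: one must split $\vep$ correctly between the Lemma~\ref{lem::first_technical} accuracy and the tail-closeness radius so that the three-term triangle inequality on each tail sums to strictly less than $\vep$, and one must confirm that the constant-extended $p$ still meets the finite piecewise-linear definition. Both are routine once the reduction to $[-R,R]$ is in place, which is why I expect the substance of the argument to lie entirely in the choice of $R$ rather than in any new estimate.
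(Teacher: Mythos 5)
Your proof is correct and follows essentially the same route as the paper's: localise to a compact interval using the limit hypotheses, invoke Lemma~\ref{lem::first_technical} there, and extend by constants on the two tails. The one (minor, favourable) difference is that you extend by the endpoint values $p_0(\pm R)$ rather than by the limits $\ph_{\pm}$ as the paper does, which makes continuity at the junction points automatic instead of requiring an extra remark.
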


\begin{proof}
By definition of a limit there exist numbers $K_{-} < K_{+}$ such that  $x < K_{-} \Rightarrow |\ph(x) - \ph_{-}| \le \frac{\vep}{2}$ and  $x > K_{+} \Rightarrow |\ph(x) - \ph_{+}| \le \frac{\vep}{2}$. We apply Lemma~\ref{lem::first_technical} to the function $\ph$ and the set $D = [K_, K_{+}]$. We define $\tilde{p}$ on $[K_{-}, K_{+}]$ according to Lemma~\ref{lem::first_technical}. We define $p$ as
\[
p(x) = 
\left\{
\begin{array}{lcr}
\ph_{-} & \rm{for} & x \in [-\infty, K_{-}]\\
\tilde{p}(x) & \rm{for} & x \in [K_{-}, K_{+}]\\
\ph_{+} & \rm{for} & x \in [ K_{+}, \infty]\\
\end{array}
\right..
\]
It can be easily verified, that it has all the desired properties.
\end{proof}

\begin{cor}
\label{cor::first_technical}
For every $\vep > 0$ there exists a combination of ReLU functions which approximates a sigmoid function with accurracy $\vep$ in the Sobolev space.
\end{cor}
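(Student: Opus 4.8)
The plan is to reduce the corollary to Lemma~\ref{lem::second_technical}, exploiting that the sigmoid is a well-behaved $\Cj$ function on all of $\mathbb{R}$, and then to rewrite the resulting piecewise-linear approximant explicitly as a sum of ReLU units.

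First I would verify that the sigmoid $\sigma(x) = 1/(1+e^{-x})$ meets the hypotheses of Lemma~\ref{lem::second_technical}: it is $\Cj(\mathbb{R})$, has finite limits $\sigma_{-} = 0$ and $\sigma_{+} = 1$ as $x \to \mp\infty$, and its derivative $\sigma'(x) = \sigma(x)\bigl(1-\sigma(x)\bigr)$ tends to $0$ at both ends. Lemma~\ref{lem::second_technical} then produces, for the prescribed $\vep$, a continuous piecewise-linear $p:\mathbb{R}\to\mathbb{R}$ with $|\sigma(x) - p(x)| < \vep$ for all $x$ and $|\sigma'(x) - p'(x)| < \vep$ for all $x$ outside the finite set $P$ of breakpoints of $p$.

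The substantive step is to realise $p$ as a finite combination of ReLU functions. By the construction of Lemma~\ref{lem::second_technical}, $p$ has slope $0$ to the left of its smallest breakpoint and to the right of its largest one. Listing the breakpoints as $\xi_1 < \cdots < \xi_n$ and letting $\Delta_j$ be the change of slope of $p$ at $\xi_j$, I would define
\[
q(x) = \sum_{j=1}^{n} \Delta_j \, \mathrm{ReLU}(x - \xi_j).
\]
All terms vanish for $x < \xi_1$, so $q$ agrees there with $p$, which equals $\sigma_{-} = 0$; each summand contributes exactly the increment $\Delta_j$ to the slope as $x$ passes $\xi_j$, so $q$ reproduces the slope of $p$ on every linear piece. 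Being continuous and matching $p$ on the leftmost piece, $q$ then coincides with $p$ everywhere; note the eventual constancy of $p$ forces $\sum_j \Delta_j = 0$, consistent with the right tail again having slope $0$. Since $\sigma_{-} = 0$ no additive constant is needed, so $q$ is literally a combination of ReLU units, i.e.\ the output of a single hidden ReLU layer.

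Finally I would transfer the estimates: because $q = p$, the bound $|\sigma - q| < \vep$ holds on all of $\mathbb{R}$ and $|\sigma' - q'| < \vep$ holds off the finite (hence measure-zero) set $P$, which is exactly the sense of approximation in the Sobolev space $\mathcal{S}_1$; the identical argument applies to the leaky ReLU. I expect the only delicate point to be this representation step — specifically, using that $p$ starts with zero slope so that a single layer of plain ReLUs (with no reflected ``$-x$'' units) already suffices, and being comfortable that the non-differentiability of $q$ on the finite set $P$ is harmless since the Sobolev norm only constrains the derivative up to sets of measure zero.
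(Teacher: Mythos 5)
Your proposal is correct and follows essentially the same route as the paper: apply Lemma~\ref{lem::second_technical} to the sigmoid and then express the resulting piecewise-linear function as a finite sum of ReLU units. The paper states this last representation step as an immediate fact, whereas you carry it out explicitly via the slope-increment decomposition $q(x)=\sum_j \Delta_j\,\mathrm{ReLU}(x-\xi_j)$; that is just a more detailed write-up of the same argument.
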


\begin{proof}
It follows immediately from Lemma~\ref{lem::second_technical} and the fact, that any piecewise-continuous function on $\mathbb{R}$ can be expressed as a finite sum of ReLU activations.
\end{proof}

\begin{rem}
The authors decided, for the sake of clarity and better readability of the paper, to not treat the issue of non-differentiabilities of the piecewise-linear function at the junction points. It can be approached in various ways, either by noticing they form a finite, and thus a zero-Lebesgue measure set and invoking the formal definition f Sobolev spaces, or by extending the definition of a derivative, but it leads only to non-interesting technical complications.
\end{rem}

\begin{proof}[Proof of Theorem~\ref{thm::sobolev_approximation}]
By Hornik's result (Hornik~\cite{hornik1991approximation}) there exists a combination of $N$ sigmoids approximating the function $f$ in the Sobolev space with $\frac{\vep}{2}$ accuracy. Each of those sigmoids can, in turn, be approximated up to $\frac{\vep}{2 N}$ accuracy by a finite combination of ReLU (or leaky ReLU) functions (Corollary~\ref{cor::first_technical}), and the theorem follows.
\end{proof}

\setcounter{thm}{1}
\begin{thm}
Let $f$ be a  $\Cj(S)$. Let $g$ be a continuous function satisfying $\|g - \tfrac{\partial f}{\partial x}\| > 0$. Then, there exists an $\vep = \vep(f,g)$ such that for any $\Cj$ function $h$ there holds either  $\|f - h\| \ge \vep$ or $\left\|g - \frac{\partial h}{\partial x} \right\| \ge \vep$.
\end{thm}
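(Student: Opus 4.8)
The plan is to argue by contradiction, reducing the statement to the classical fact that uniform convergence of both the values and the derivatives of a sequence forces the limit of the derivatives to equal the derivative of the limit. Suppose no such $\vep>0$ existed. Then for every $n\in\mathbb{N}$ I could choose a $\Cj$ function $h_n$ satisfying $\|f-h_n\|<\tfrac{1}{n}$ and $\|g-\tfrac{\partial h_n}{\partial x}\|<\tfrac{1}{n}$ \emph{simultaneously} (otherwise $\vep=\tfrac{1}{n}$ would already work). This produces a sequence with $h_n\to f$ and $\tfrac{\partial h_n}{\partial x}\to g$, both uniformly in the sup norm on $S$. The entire proof then amounts to showing that these two uniform limits force $g=\tfrac{\partial f}{\partial x}$, which directly contradicts the standing hypothesis $\|g-\tfrac{\partial f}{\partial x}\|>0$.

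To extract $g=\tfrac{\partial f}{\partial x}$ from these convergences I would use the Fundamental Theorem of Calculus together with the uniform convergence of the derivatives. Fixing a base point $x_0$ and a segment lying inside the (compact) domain, each $h_n$ satisfies $h_n(x)-h_n(x_0)=\int_{x_0}^{x}\tfrac{\partial h_n}{\partial x}(t)\,dt$. Uniform convergence $\tfrac{\partial h_n}{\partial x}\to g$ lets me pass to the limit under the integral sign, while the pointwise convergence $h_n\to f$ handles the left-hand side, yielding $f(x)-f(x_0)=\int_{x_0}^{x}g(t)\,dt$ for every $x$ on the segment. Since $g$ is continuous, differentiating this identity (again by the Fundamental Theorem of Calculus) gives $\tfrac{\partial f}{\partial x}=g$ along the segment, which is the contradiction sought.

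In the multivariate setting, where $S\subseteq\mathbb{R}^d$ and $g$ is a gradient field, I would run exactly this argument coordinate-wise: freezing all but the $i$-th variable and moving along the $i$-th axis reduces each component to the one-dimensional computation above, so that $\tfrac{\partial f}{\partial x_i}=g_i$ for every $i$ and hence $g=\tfrac{\partial f}{\partial x}$. Note that uniform convergence on all of $S$ dominates uniform convergence on each such segment, so no extra work is needed to localise the hypotheses.

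The only genuinely delicate step is the interchange of the limit and the integral, and this is precisely where uniform convergence — control in the sup norm rather than merely pointwise — is indispensable; with pointwise convergence alone the conclusion can fail. Extracting the sequence $(h_n)$ from the negation of the statement is immediate, and the bookkeeping needed to join arbitrary points of a general compact $S$ by admissible segments is routine. Alternatively, one could phrase the whole argument locally in a small ball around a single point where $g$ and $\tfrac{\partial f}{\partial x}$ differ, thereby sidestepping any global connectivity considerations altogether.
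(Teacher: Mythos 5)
Your argument is correct and follows essentially the same route as the paper's: negating the claim yields a sequence $h_n \to f$ with $\tfrac{\partial h_n}{\partial x} \to g$ uniformly, and the classical term-by-term differentiation theorem then forces $g = \tfrac{\partial f}{\partial x}$, contradicting $\|g - \tfrac{\partial f}{\partial x}\| > 0$. The only difference is one of detail: the paper invokes that theorem as a black box, while you re-prove it via the Fundamental Theorem of Calculus and spell out the coordinate-wise reduction in the multivariate case.
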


\begin{proof}
Assume that the converse holds. This would imply, that there exists a sequence of functions $h_n$ such that $ \lim\limits_{n \rightarrow \infty} \frac{\partial h_n}{\partial x} = g $ and $\lim\limits_{n \rightarrow \infty} h_n = f$. A theorem about term-by-term differentiation implies then that the limit $\lim\limits_{n \rightarrow \infty} h_n$ is differentiable, and that the equality $\frac{\partial}{\partial x}\left(\lim\limits_{n \rightarrow \infty} h_n\right) =  \tfrac{\partial f}{\partial x}$ holds. However, $\frac{\partial}{\partial x}\left(\lim\limits_{n \rightarrow \infty} h_n\right) =  \lim\limits_{n \rightarrow \infty} \frac{\partial h_n}{\partial x} = g$, contradicting  $\|g - \tfrac{\partial f}{\partial x}\| > 0$.
\end{proof}

\setcounter{prop}{0}
\begin{prop}
Given any two functions $f :S \rightarrow  \mathbb{R} $ and $g :S \rightarrow  \mathbb{R}^d$ on $S \subseteq \mathbb{R}^d$ and a finite set $\Sigma \subset S$, there exists neural network $h$ with a ReLU (or a leaky ReLU) activation such that $\forall x \in \Sigma: f(x) = h(x)$ and $g(x) = \tfrac{\partial h}{\partial x}(x)$ (it has 0 training loss). 
\end{prop}

\begin{proof}
We first prove the theorem in a special, 1-dimensional case (when $S$ is a subset of $\mathbb{R}$). 
Form now it will be assumed that $S$ is a subset of $\mathbb{R}$ and $\Sigma = \{\sigma_1 < \ldots < \sigma_n$\} is a finite subset of $S$. Let $\vep$ be smaller than $\frac{1}{5}\min(s_i - s_{i-1})$, $i=2,\ldots,n$. We define a function $p_i$ as follows
\[
p_i(x) = 
\left\{
\begin{array}{lcr}
\frac{f(\sigma_i) - g(\sigma_i)\vep}{\vep}(x - \sigma_i + 2\vep) & \rm{for} & x \in [\sigma_i-2\vep, \sigma_i - \vep]\\
f(\sigma_i) + g(\sigma_i)(x-\sigma_i) & \rm{for} & x \in [\sigma_i-\vep, \sigma_i + \vep]\\
-\frac{f(\sigma_i) + g(\sigma_i)\vep}{\vep}(x - \sigma_i - 2\vep) & \rm{for} & x \in [\sigma_i+\vep, \sigma_i + 2\vep]\\
0 && {\rm otherwise}
\end{array}
\right..
\]
Note that the functions $p_i$ have disjoint supports for $i \neq j$. We define $h(x) = \sum_{i=1}^n p_i(x)$. By construction, it has all the desired properties.

Now let us move to the general case, when $S$ is a subset of $\mathbb{R}^d$. We will denote by $\pi_k$ a projection of a $d$-dimensional point $\sigma$ onto the $k$-th coordinate. The obstacle to repeating the $1$-dimensional proof in a straightforward matter (coordinate-by-coordinate) is that two or more of the points $\sigma_i$ can have one or more coordinates equal. We will use a linear change of coordinates to get past this technical obstacle. Let $A \in GL(d, \mathbb{R})$ be matrix such that there holds $\pi_k(A \sigma_i) \neq \pi_k(A \sigma_j)$ for any $i \neq j$ and any $K = 1, \ldots, d$. Such $A$ exists, as every condition  $\pi_k(A \sigma_i) = \pi_k(A \sigma_j)$ defines a codimension-one submanifold in the space $GL(d, \mathbb{R})$, thus the complement of the union of all such submanifolds is a full dimension (and thus nonempty) subset of $GL(d, \mathbb{R})$. Using the one-dimensional construction we define functions $p^k(x)$, $k=1,\ldots,d$, such that $p^k(\pi_k(A \sigma_i)) = \frac{1}{d} f(\sigma_i)$ and $(p^k)' (\pi_k(A \sigma_i)) = 0$. Similarly, we construct $q^k(x)$ in such manner $q^k(\pi_k(A \sigma_i)) = 0$ and $(q^k)' (\pi_k(A \sigma_i)) = A^{-1} g(\sigma_i)$. Note that those definitions a are valid  because $\pi_k(A \sigma_i) \neq \pi_k(A \sigma_j)$ for $i \neq j$, so the right sides are well-defined unique numbers.

It remains to put all the elements together. This is done as follows. First we extend $p^k$, $q^k$ to the whole space $\mathbb{R}$ ``trivially'', i.e. for any $\mathbf{x} \in \mathbb{R}$, $\mathbf{x} = (x^1,\ldots,x^d)$ we define $P^k(\mathbf{x}) := p^k(x^k)$. Similarly,  $Q_i^k(\mathbf{x}) := q_i^k(x^k)$. Finally, $h(\mathbf{x}) := \sum_{k=1}^{d} P^k(A \mathbf{x}) + \sum_{k=1}^{d} Q^k(A \mathbf{x})$. This function has the desired properties.  Indeed for every $\sigma_i$ we have
\[
h(\sigma_i) =  \sum_{k=1}^{d} P^k(A \sigma_i) + \sum_{k=1}^{d} Q^k(A \sigma_i) = \sum_{k=1}^{d} p^k(\pi_k(A \sigma_i)) + \sum_{k=1}^{d} 0 = f(A \sigma_i)
\]
and
\[
\frac{\partial h}{\partial x} (\sigma_i) =  \sum_{k=1}^{d} (P^k)'(A \sigma_i) + \sum_{k=1}^{d} (Q^k)' (A \sigma_i) = \sum_{k=1}^{d} 0 + \sum_{k=1}^{d} \frac{\partial Q^k}{\partial x} (\pi_k(A \sigma_i)) = 
\]
\[
A \sum_{k=1}^{d} (0,\ldots,  \underbracket{(q^k)'(\pi_k(A \sigma_i))}_{k} ,\ldots,0)^T = A \cdot A^{-1} g(\sigma_i) = g(\sigma_i).
\]
\end{proof}
This completes the proof.

\setcounter{prop}{2}
\begin{prop}
\label{prop::}
There holds $K_{sob}(\mathcal{F}_{\rm{G}}) < K_{reg}(\mathcal{F}_{\rm{G}})$ and $K_{sob}(\mathcal{F}_{\rm{PL}}) < K_{reg}(\mathcal{F}_{\rm{PL}})$.
\end{prop}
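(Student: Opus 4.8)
The plan is to handle the two families separately, in each case computing $K_{sob}$ and $K_{reg}$ (or bounding them) directly from the definitions and then exhibiting a strict gap. The unifying principle is that a single evaluation point contributes one scalar constraint per output coordinate when only values are observed, but a second, independent constraint (the derivative) in the Sobolev regime, so the derivative can absorb exactly one of the degrees of freedom that values alone fail to pin down.

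For $\mathcal{F}_{\rm{G}}$ I would first show $K_{sob}(\mathcal{F}_{\rm{G}}) = 1$. Fix any evaluation point $x_0$ and write $v = f(x_0) > 0$ and $d = f'(x_0)$ for a Gaussian density with parameters $(\mu,\sigma)$. Since $d/v = (\mu - x_0)/\sigma^2$, the observed ratio $r := d/v$ yields $\mu = x_0 + r\sigma^2$; substituting back into the value equation reduces it to $v\sqrt{2\pi} = \sigma^{-1}\exp(-\tfrac12 r^2\sigma^2)$. As a function of $\sigma>0$ the right-hand side has logarithmic derivative $-1/\sigma - r^2\sigma < 0$, hence is strictly decreasing with range $(0,\infty)$, so $\sigma$ (and then $\mu$) is recovered uniquely. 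Thus value-plus-derivative at one point determines the Gaussian, giving $K_{sob}=1$. To obtain $K_{reg}(\mathcal{F}_{\rm{G}}) \ge 2 > K_{sob}$ it suffices to exhibit, at a single point, two distinct Gaussians with the same value there: solving $\sigma^{-1}\exp(-\mu^2/(2\sigma^2)) = c$ at $x_0 = 0$ admits a one-parameter family of solutions $(\mu,\sigma)$, so one value observation is insufficient.

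For $\mathcal{F}_{\rm{PL}}$ the same bookkeeping applies coordinate-wise. Each output component is an affine map $A_i x_i + b_i$ with two unknowns, whose slope $A_i$ is exactly the corresponding derivative. With value-plus-derivative at one point the derivative delivers every $A_i$ and the value then delivers every $b_i$, so $K_{sob}(\mathcal{F}_{\rm{PL}}) = 1$; with values only, a single point gives one equation per component, which underdetermines the pair $(A_i,b_i)$, forcing a strictly larger sample and hence $K_{sob} < K_{reg}$.

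The main obstacle, and the place where I expect to have to be careful, is the universal quantifier ``for every set of distinct points'' in the definitions. For the Gaussian the recovery argument above is uniform in $x_0$, so no difficulty arises. For $\mathcal{F}_{\rm{PL}}$, however, one must ensure that the value data actually separates the parameters of \emph{every} component: distinct points may share a coordinate and then carry no information about that component's slope, so establishing the lower bound on $K_{reg}$ requires either a general-position reading of the definition or an explicit construction of points that are distinct along each coordinate. Verifying non-degeneracy --- injectivity of $(\mu,\sigma)\mapsto(f(x_0),f'(x_0))$ via the monotone $\sigma$-equation above, and invertibility of the relevant $2\times 2$ linear systems for the affine pieces --- is the only genuinely quantitative content; the remaining work is careful handling of that quantifier.
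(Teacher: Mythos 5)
Your proposal is correct and follows essentially the same route as the paper's proof: for $\mathcal{F}_{\rm{G}}$ you use the ratio $f'(x_0)/f(x_0)$ to eliminate $\mu$ and then invoke strict monotonicity in $\sigma^2$ to conclude $K_{sob}=1$, and for $\mathcal{F}_{\rm{PL}}$ you read off the slopes from the derivative and the intercepts from the value. The only differences are minor: you certify $K_{reg}(\mathcal{F}_{\rm{G}})\ge 2$ via a one-parameter family of collisions at a single point where the paper exhibits an explicit two-point collision (both suffice for strictness), and you explicitly flag the degenerate shared-coordinate issue in the quantifier over point sets for $\mathcal{F}_{\rm{PL}}$, which the paper leaves implicit.
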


\begin{proof}
Gaussian PDF functions form a 2-parameter family $\frac{1}{\sqrt{2 \pi \sigma^2}} e^{-\frac{(x-\mu)^2}{2 \sigma^2}}$. Therefore, determining $f$ in that family is equivalent to determining the values of $\mu$ and $\sigma^2$. Given $\alpha = \frac{1}{\sqrt{2 \pi \sigma^2}} e^{-\frac{(x-\mu)^2}{2 \sigma^2}}$, $\beta = - \frac{x-\mu}{\sigma^2 \sqrt{2 \pi \sigma^2}} e^{-\frac{(x-\mu)^2}{2 \sigma^2}}$, we get  $\frac{\beta}{\alpha} = - \frac{x-\mu}{\sigma^2}$ and $2 \ln (\sqrt{2 \pi} \alpha) = - \ln (\sigma^2) - \frac{(x - \mu)^2}{\sigma^2}$. Thus $2 \ln (\sqrt{2 \pi} \alpha) = - \ln (\sigma^2) - \frac{\beta^2}{\alpha^2}\sigma^2$. The right hand side is a strictly decreasing function of $\sigma^2$. Substituting its unique solution to $\frac{\beta}{\alpha} = - \frac{x-\mu}{\sigma^2}$ we determine $\mu$. Thus $K_{sob}$ is equal to $1$ for the family of Gaussian PDF functions.%

On the other hand, there holds $K_{reg} > 2$ for the family of Gaussian PDF functions. For example, $N(2,1)$ and $N(2.847..., 1.641...)$ have the same values at $x=0$ and $x = 3$ (existence of a ``real'' solution near this approximate solution is an immediate consequence of the Implicit Function Theorem). This ends the proof for the $\mathcal{F}_{\rm{G}}$ family%

We will discuss the family $\mathcal{F}_{\rm{PL}}$ now. Every linear function is uniquely determined by its value at a single point and its derivative. Thus, for any function $f \in \mathcal{F}_{\rm{PL}}$, as the partition $D = D_1 \cup \ldots \cup D_n$ is fixed, it is sufficient to know the values and the values of the derivative of $f$ in $\sigma_1 \in D_n, \ldots, \sigma_1 \in D_n$ to determine it uniquely. On the other hand, we need at least $d+1$ (recall that $d$ is the dimension of the domain of $f$) in each of the domains $D_i$ to determine $f$ uniquely, if we are allowed to look only at the values.

\end{proof}


\section{Artificial Datasets}

\begin{figure}[htb]
\centering
\begin{tabular}{c|cc|cc}
Dataset &\multicolumn{2}{c}{20 training samples}&\multicolumn{2}{c}{100 training samples}
\\
\includegraphics[width=0.18\textwidth]{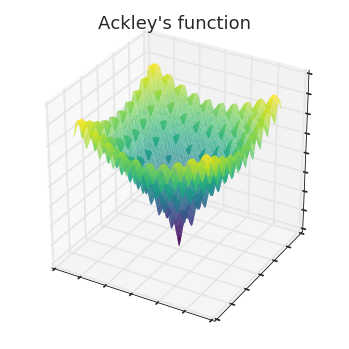}&
\includegraphics[width=0.18\textwidth]{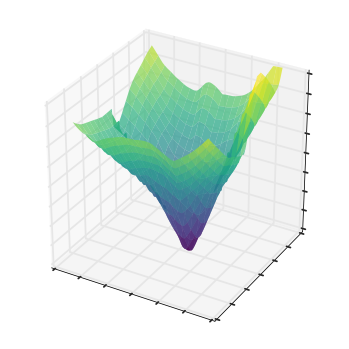}&
\includegraphics[width=0.18\textwidth]{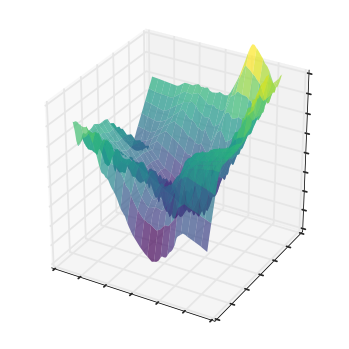}&
\includegraphics[width=0.18\textwidth]{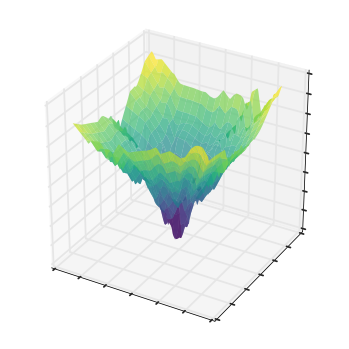}&
\includegraphics[width=0.18\textwidth]{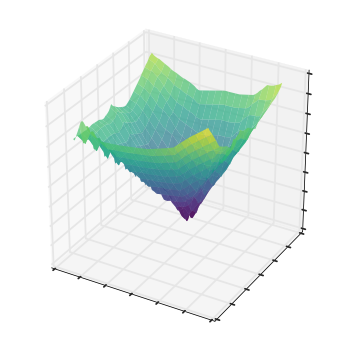}
\\
& Regular & Sobolev & Regular & Sobolev
\\
\includegraphics[width=0.18\textwidth]{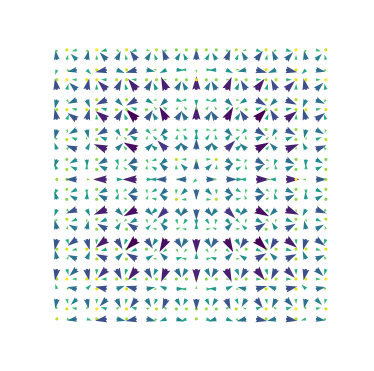}&
\includegraphics[width=0.18\textwidth]{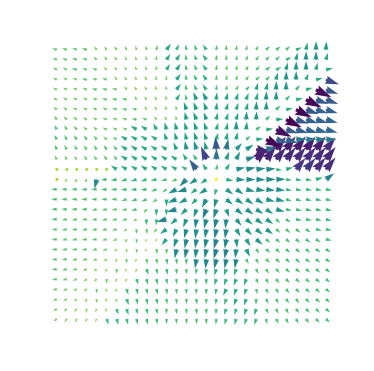}&
\includegraphics[width=0.18\textwidth]{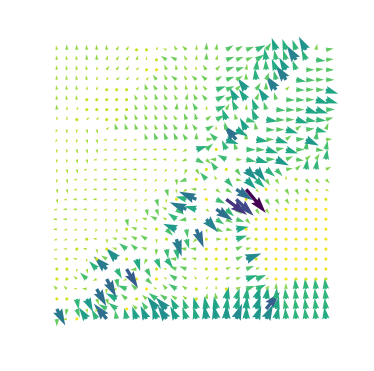}&
\includegraphics[width=0.18\textwidth]{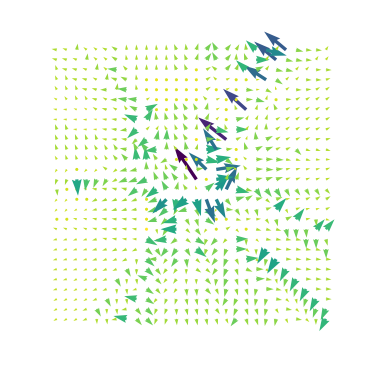}&
\includegraphics[width=0.18\textwidth]{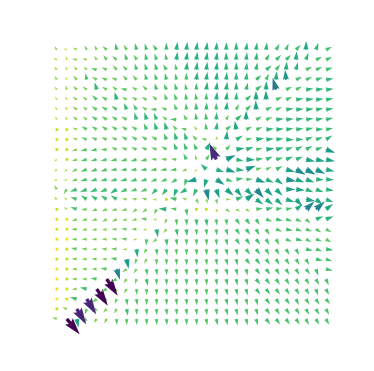}
\end{tabular}
\caption{Ackley function (on the left) and its models using regular neural network training (left part of each plot) and Sobolev Training (right part). We also plot the vector field of the gradients of each predictor underneath the function plot.}
\label{fig:data_first}
\end{figure}

\begin{figure}[htb]
\centering
\begin{tabular}{c|cc|cc}
Dataset &\multicolumn{2}{c}{20 training samples}&\multicolumn{2}{c}{100 training samples}
\\
\includegraphics[width=0.18\textwidth]{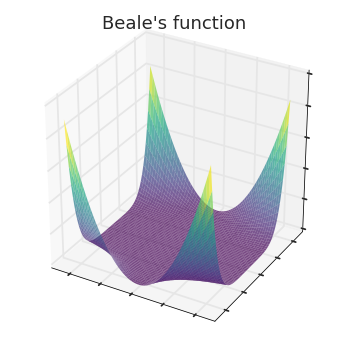}&
\includegraphics[width=0.18\textwidth]{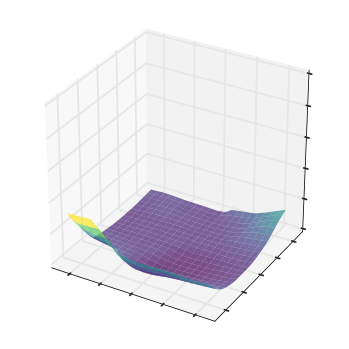}&
\includegraphics[width=0.18\textwidth]{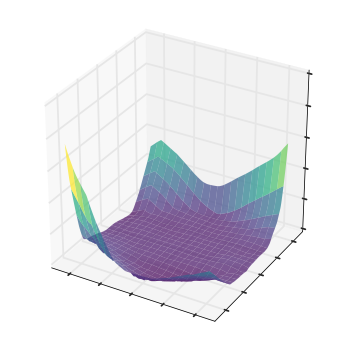}&
\includegraphics[width=0.18\textwidth]{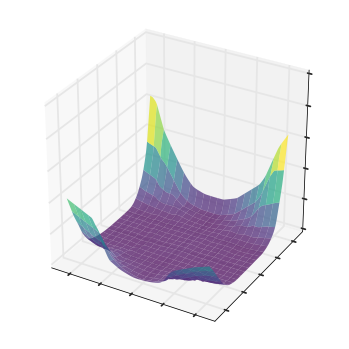}&
\includegraphics[width=0.18\textwidth]{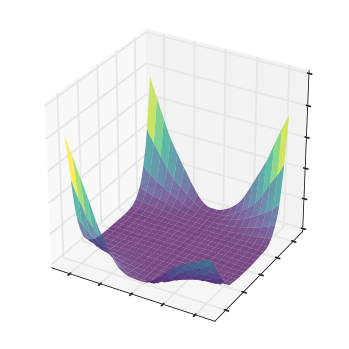}
\\
& Regular & Sobolev & Regular & Sobolev
\\
\includegraphics[width=0.18\textwidth]{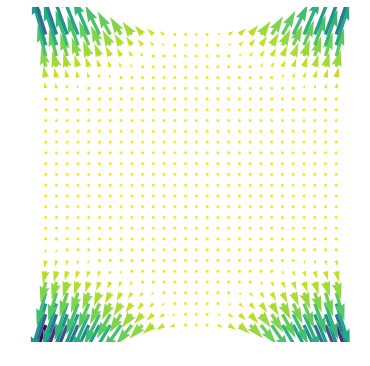}&
\includegraphics[width=0.18\textwidth]{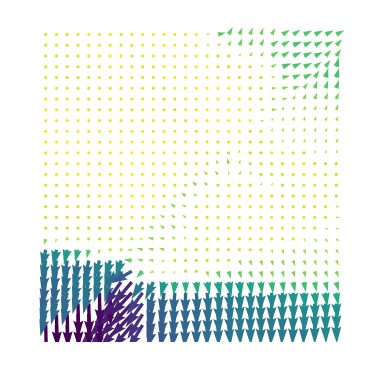}&
\includegraphics[width=0.18\textwidth]{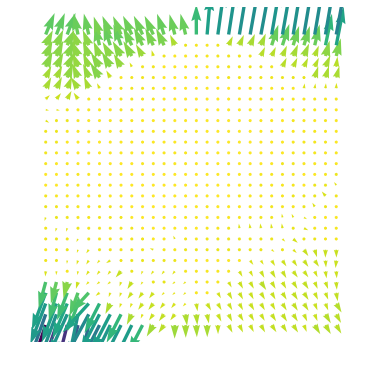}&
\includegraphics[width=0.18\textwidth]{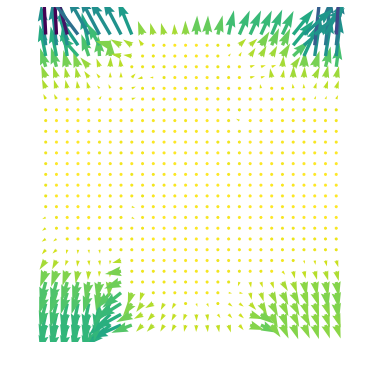}&
\includegraphics[width=0.18\textwidth]{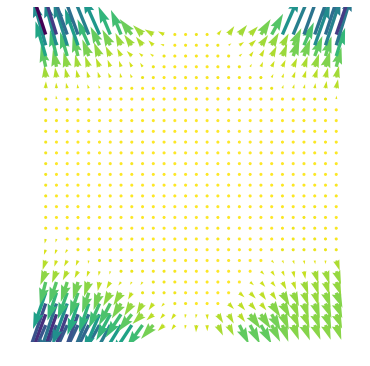}
\end{tabular}
\caption{Beale function (on the left) and its models using regular neural network training (left part of each plot) and Sobolev Training (right part). We also plot the vector field of the gradients of each predictor underneath the function plot.}
\end{figure}
\begin{figure}[htb]
\centering
\begin{tabular}{c|cc|cc}
Dataset &\multicolumn{2}{c}{20 training samples}&\multicolumn{2}{c}{100 training samples}
\\
\includegraphics[width=0.18\textwidth]{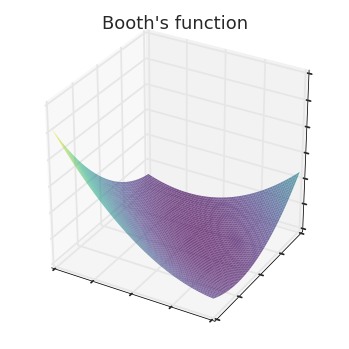}&
\includegraphics[width=0.18\textwidth]{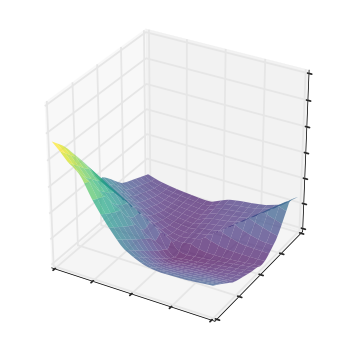}&
\includegraphics[width=0.18\textwidth]{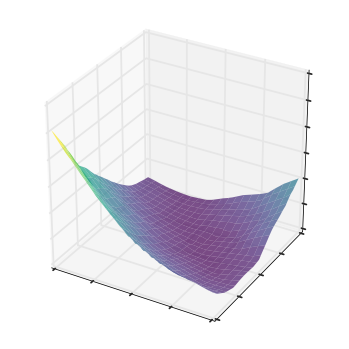}&
\includegraphics[width=0.18\textwidth]{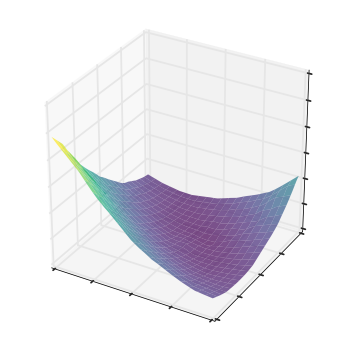}&
\includegraphics[width=0.18\textwidth]{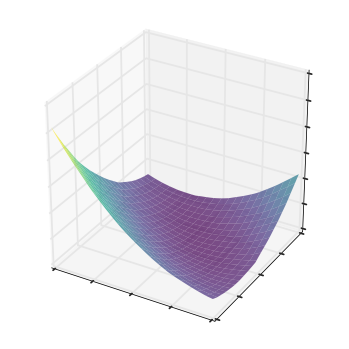}
\\
& Regular & Sobolev & Regular & Sobolev
\\
\includegraphics[width=0.18\textwidth]{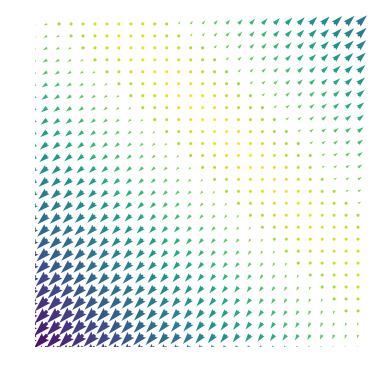}&
\includegraphics[width=0.18\textwidth]{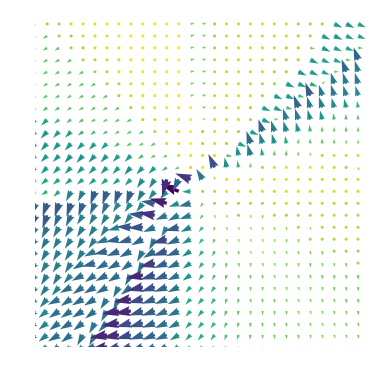}&
\includegraphics[width=0.18\textwidth]{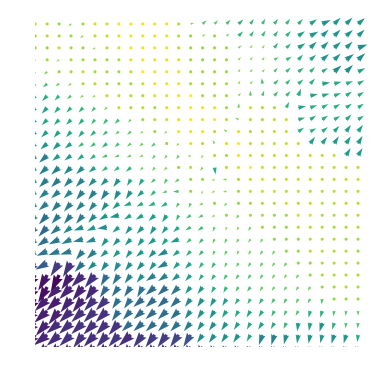}&
\includegraphics[width=0.18\textwidth]{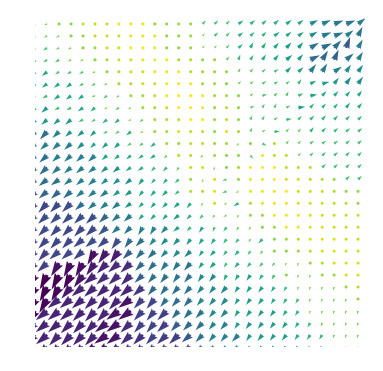}&
\includegraphics[width=0.18\textwidth]{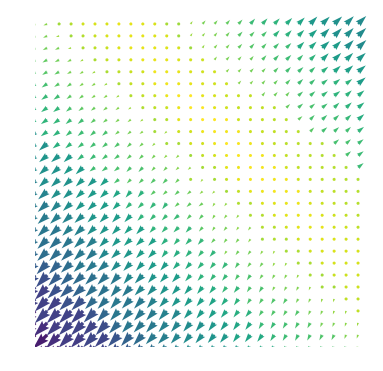}
\end{tabular}
\caption{Booth function (on the left) and its models using regular neural network training (left part of each plot) and Sobolev Training (right part). We also plot the vector field of the gradients of each predictor underneath the function plot.}
\end{figure}
\begin{figure}[htb]
\centering
\begin{tabular}{c|cc|cc}

Dataset &\multicolumn{2}{c}{20 training samples}&\multicolumn{2}{c}{100 training samples}
\\
\includegraphics[width=0.18\textwidth]{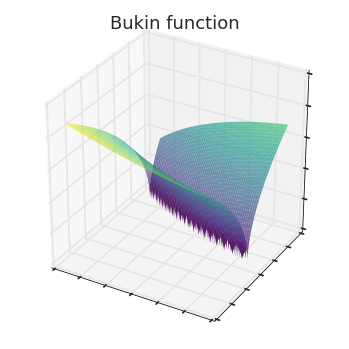}&
\includegraphics[width=0.18\textwidth]{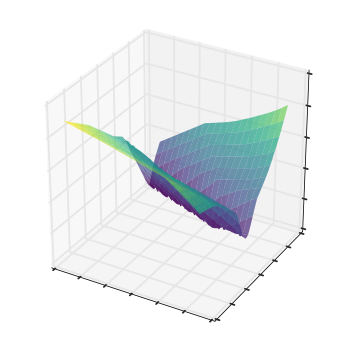}&
\includegraphics[width=0.18\textwidth]{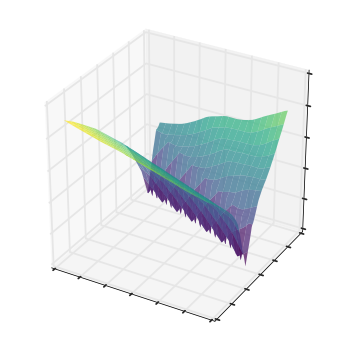}&
\includegraphics[width=0.18\textwidth]{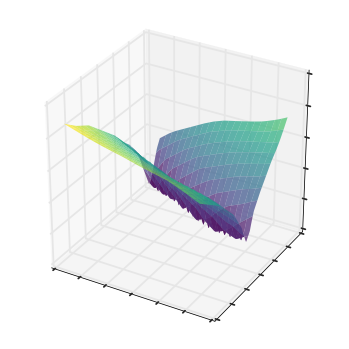}&
\includegraphics[width=0.18\textwidth]{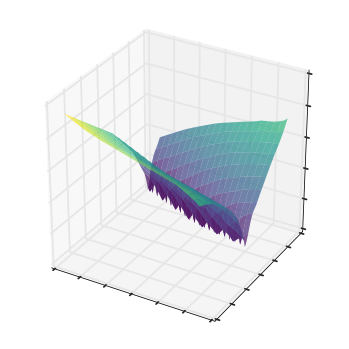}
\\
& Regular & Sobolev & Regular & Sobolev
\\
\includegraphics[width=0.18\textwidth]{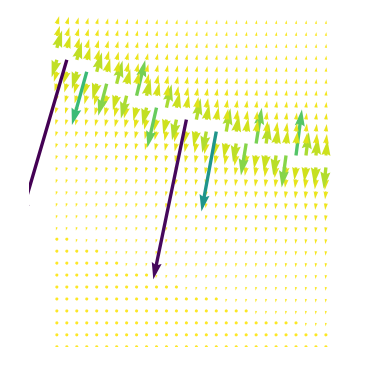}&
\includegraphics[width=0.18\textwidth]{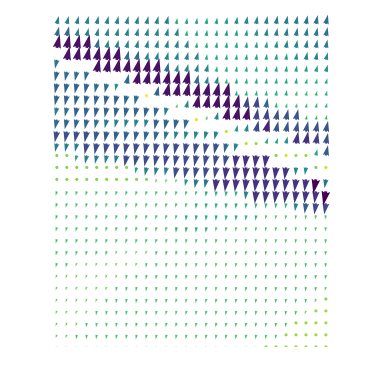}&
\includegraphics[width=0.18\textwidth]{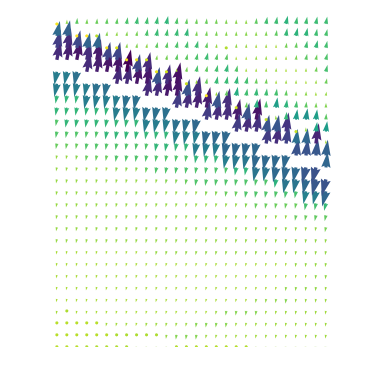}&
\includegraphics[width=0.18\textwidth]{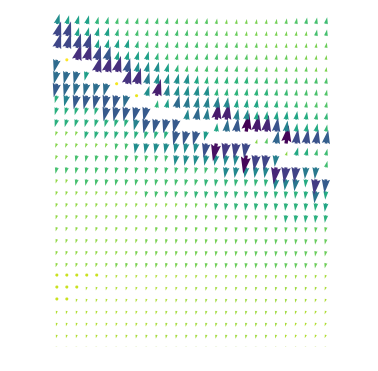}&
\includegraphics[width=0.18\textwidth]{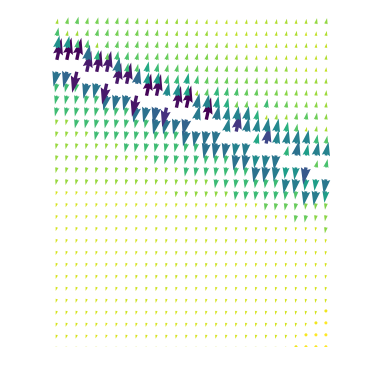}
\end{tabular}
\caption{Bukin function (on the left) and its models using regular neural network training (left part of each plot) and Sobolev Training (right part). We also plot the vector field of the gradients of each predictor underneath the function plot.}
\end{figure}
\begin{figure}[htb]
\centering
\begin{tabular}{c|cc|cc}
Dataset &\multicolumn{2}{c}{20 training samples}&\multicolumn{2}{c}{100 training samples}
\\
\includegraphics[width=0.18\textwidth]{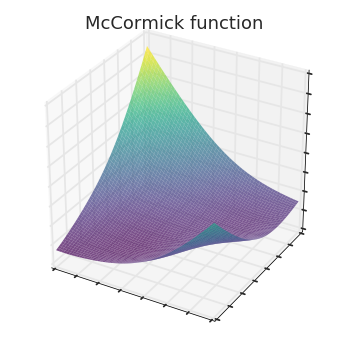}&
\includegraphics[width=0.18\textwidth]{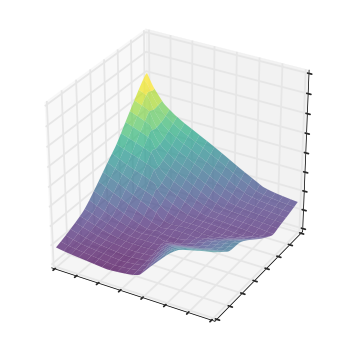}&
\includegraphics[width=0.18\textwidth]{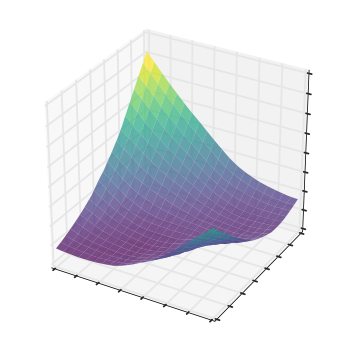}&
\includegraphics[width=0.18\textwidth]{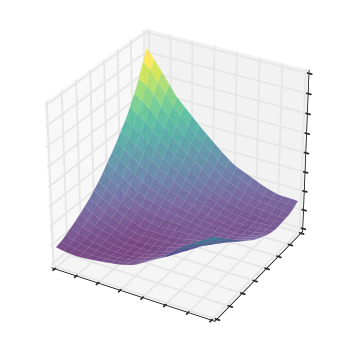}&
\includegraphics[width=0.18\textwidth]{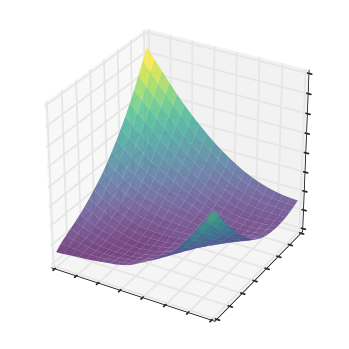}
\\
& Regular & Sobolev & Regular & Sobolev
\\
\includegraphics[width=0.18\textwidth]{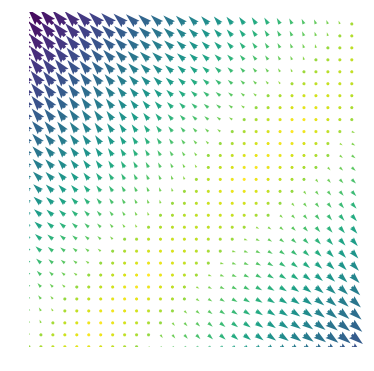}&
\includegraphics[width=0.18\textwidth]{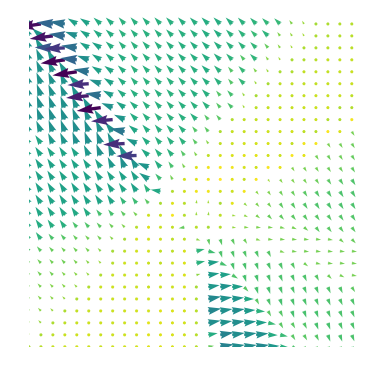}&
\includegraphics[width=0.18\textwidth]{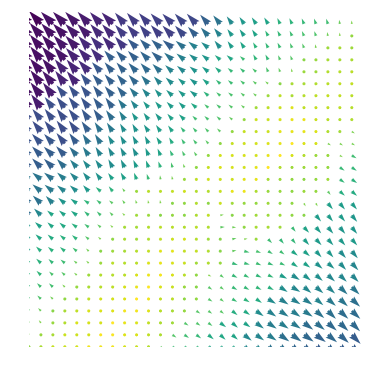}&
\includegraphics[width=0.18\textwidth]{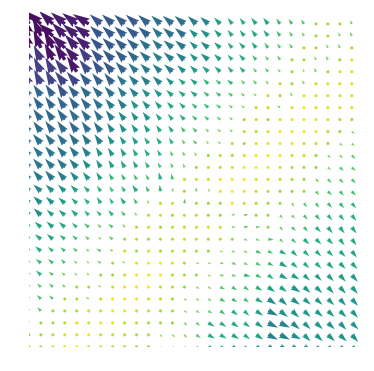}&
\includegraphics[width=0.18\textwidth]{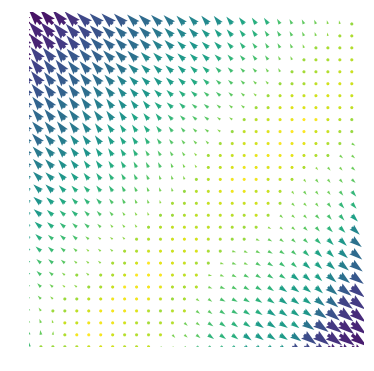}
\end{tabular}
\caption{McCormick function (on the left) and its models using regular neural network training (left part of each plot) and Sobolev Training (right part). We also plot the vector field of the gradients of each predictor underneath the function plot.}
\end{figure}
\begin{figure}[htb]
\centering
\begin{tabular}{c|cc|cc}
Dataset &\multicolumn{2}{c}{20 training samples}&\multicolumn{2}{c}{100 training samples}
\\
\includegraphics[width=0.18\textwidth]{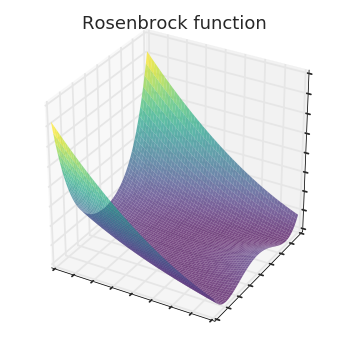}&
\includegraphics[width=0.18\textwidth]{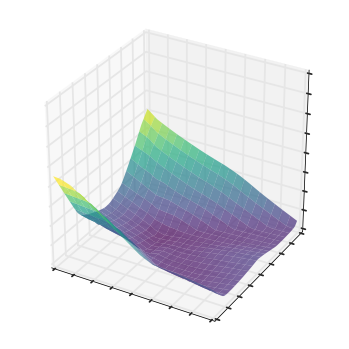}&
\includegraphics[width=0.18\textwidth]{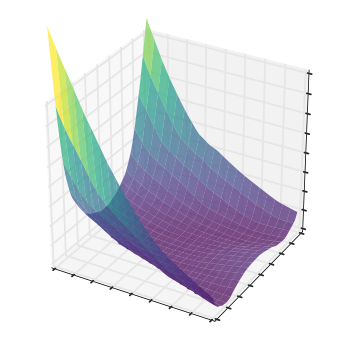}&
\includegraphics[width=0.18\textwidth]{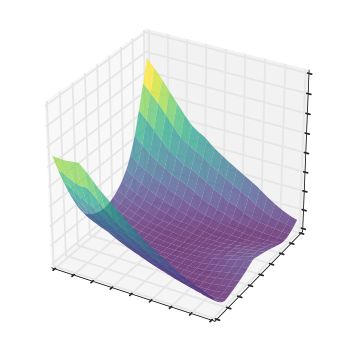}&
\includegraphics[width=0.18\textwidth]{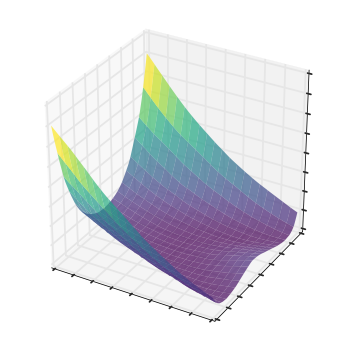}
\\
& Regular & Sobolev & Regular & Sobolev
\\
\includegraphics[width=0.18\textwidth]{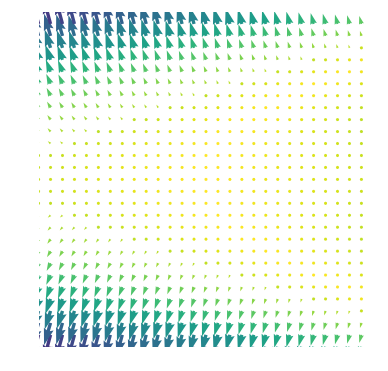}&
\includegraphics[width=0.18\textwidth]{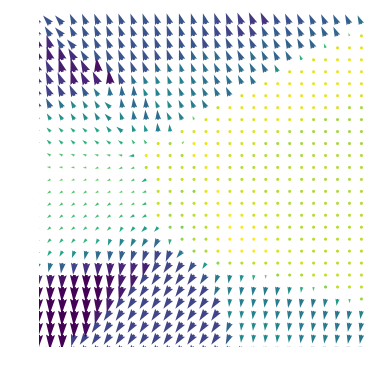}&
\includegraphics[width=0.18\textwidth]{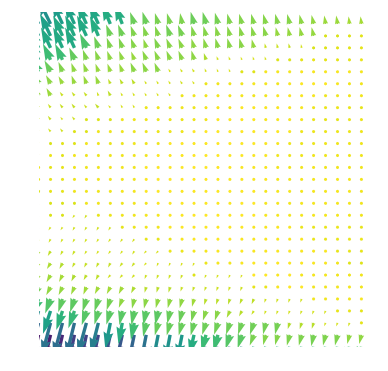}&
\includegraphics[width=0.18\textwidth]{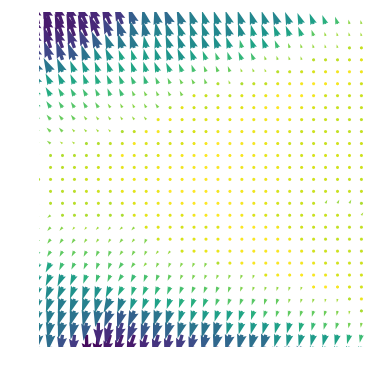}&
\includegraphics[width=0.18\textwidth]{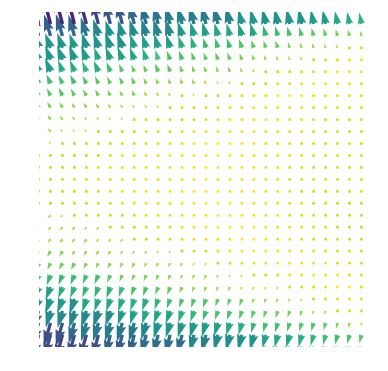}\end{tabular}
\caption{Rosenbrock function (on the left) and its models using regular neural network training (left part of each plot) and Sobolev Training (right part). We also plot the vector field of the gradients of each predictor underneath the function plot.}
\end{figure}

\begin{figure}[htb]
\centering
\begin{tabular}{c|cc|cc}
Dataset &\multicolumn{2}{c}{20 training samples}&\multicolumn{2}{c}{100 training samples}
\\
\includegraphics[width=0.18\textwidth]{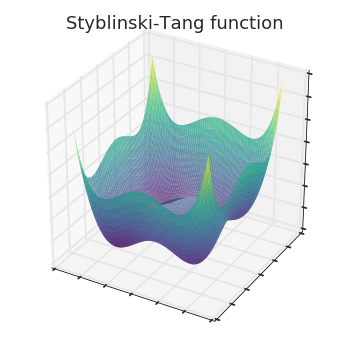}&
\includegraphics[width=0.18\textwidth]{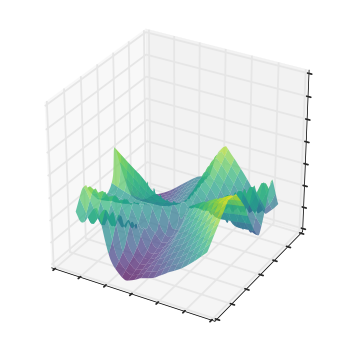}&
\includegraphics[width=0.18\textwidth]{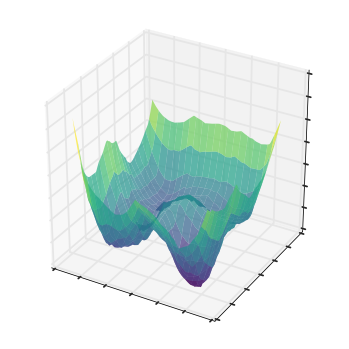}&
\includegraphics[width=0.18\textwidth]{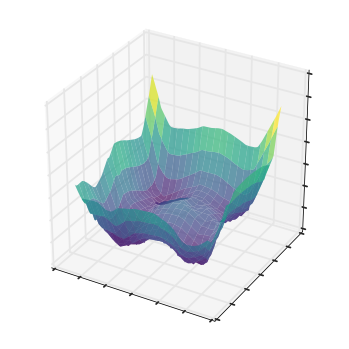}&
\includegraphics[width=0.18\textwidth]{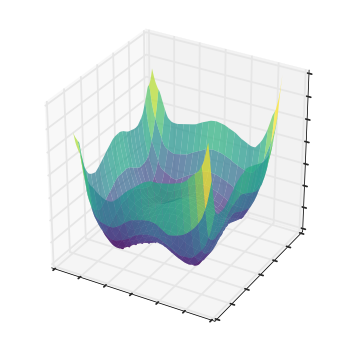}
\\
& Regular & Sobolev & Regular & Sobolev
\\
\includegraphics[width=0.18\textwidth]{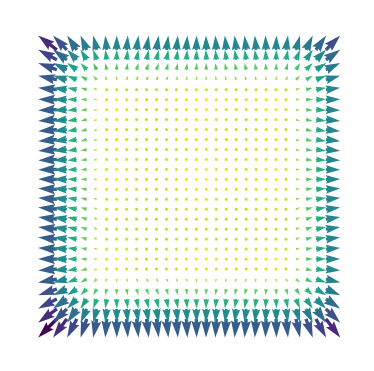}&
\includegraphics[width=0.18\textwidth]{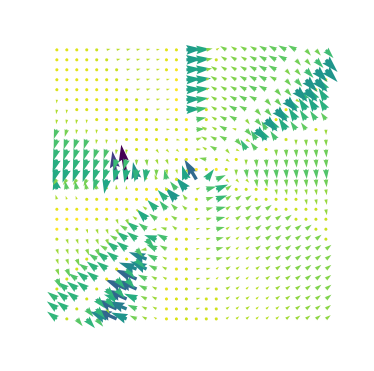}&
\includegraphics[width=0.18\textwidth]{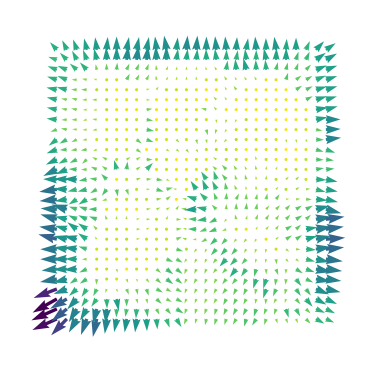}&
\includegraphics[width=0.18\textwidth]{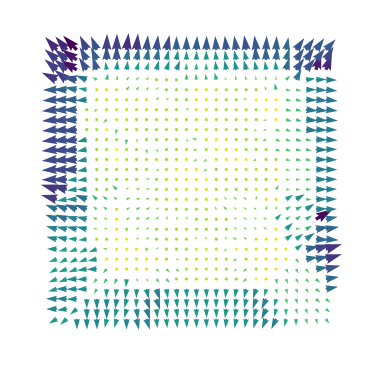}&
\includegraphics[width=0.18\textwidth]{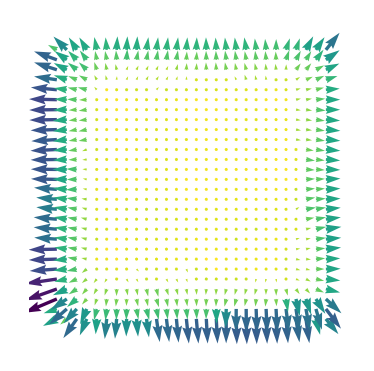}\end{tabular}
\caption{Styblinski-Tang function (on the left) and its models using regular neural network training (left part of each plot) and Sobolev Training (right part). We also plot the vector field of the gradients of each predictor underneath the function plot.}
\label{fig:data_last}
\end{figure}

Functions used (visualised at Figures \ref{fig:data_first}-\ref{fig:data_last}):
\begin{itemize}
\item Ackley's 
$$f(x,y) = -20\exp \left ( -0.2\sqrt{0.5(x^2+y^2)} \right ) - \exp \left (0.5 (\cos(2\pi x) + \cos(2\pi y)) \right ) + e +20,$$ for $x,y \in [-5,5] \times [-5,5]$
\item Beale's
$$
f(x,y) = (1.5-x+xy)^2+(2.25-x+xy^2)^2+(2.625-x+xy^3)^2,
$$
for $x, y \in [-4.5, 4.5] \times [-4.5, 4.5]$
\item Booth
$$
f(x, y) = (x+2y-7)^2 + (2x+y-5)^2,
$$
for $x, y \in [-10,10]\times [-10,10]$
\item Bukin
$$
f(x, y) = 100 \sqrt{|y=0.01x^2|} + 0.01|x+10|,
$$
for $x, y \in [-15,-5]\times[-3,3]$
\item McCormick
$$
f(x, y) = \sin(x+y) + (x-y)^2 -1.5x+2.5y+1,
$$
for $x,y \in [-1.5,4] \times [-3,4]$
\item Rosenbrock
$$
f(x, y) = 100(y-x^2)^2 + (x-1)^2,
$$
for $x, y \in [-2,2]\times[-2,2]$
\item Styblinski-Tang
$$
f(x, y) = 0.5 (x^4 - 16x^2+5x + y^4 - 16y^2 + 5y),
$$
for $x,y \in [-5,5]\times[-5,5]$
\end{itemize}

Networks were trained using the Adam optimiser with learning rate $3e-5$.
Training set has been sampled uniformly from the domain provided. Test set consists always of 10,000 points sampled uniformly from the same domain.

\section{Policy Distillation}

Agents policies are feed forward networks consisting of:
\begin{itemize}
\item 32 8x8 kernels with stride 4
\item ReLU nonlinearity
\item 64 4x4 kernels with stride 2
\item ReLU nonlinearity
\item 64 3x3 kernels with stride 1
\item ReLU nonlinearity
\item Linear layer with 512 units
\item ReLU nonlinearity
\item Linear layer with 3 (Pong), 4 (Breakout) or 6 outputs (Space Invaders)
\item Softmax
\end{itemize}

They were trained with A3C~\cite{mnih2016asynchronous} over 80e6 steps, using history of length 4, greyscaled input, and action repeat 4. Observations were scaled down to 84x84 pixels.

Data has been gathered by running trained policy to gather 100K frames (thus for 400K actual steps). Split into train and test sets has been done time-wise, ensuring that test frames come from different episodes than the training ones.

Distillation network consists of:
\begin{itemize}
\item 16 8x8 kernels with stride 4
\item ReLU nonlinearity
\item 32 4x4 kernels with stride 2
\item ReLU nonlinearity
\item Linear layer with 256 units
\item ReLU nonlinearity
\item Linear layer with 3 (Pong), 4 (Breakout) or 6 outputs (Space Invaders)
\item Softmax
\end{itemize}
and was trained using Adam optimiser with learning rate fitted independently per game and per approach between $1e-3$ and $1e-5$. Batch size is 200 frames, randomly selected from the training set.

\section{Synthetic Gradients}

All models were trained using multi-GPU optimisation, with Sync main network updates and Hogwild SG module updates.

\subsection{Meaning of Sobolev losses for synthetic gradients}
In the setting considered, the true label $y$ is used only as a conditioning, however one could also provide supervision for $\partial m(h, y|\theta) / \partial y$. So what is the actual effect this Sobolev losses have on SG estimator? For $L$ being log loss, it is easy to show, that they are additional penalties on matching $\log p(h,y)$ to $\log p_h$, namely:
$$
\| \partial m(h, y|\theta) / \partial y - \partial L(h, y)/ \partial y \|^2 = \| \log p(h|\theta) - \log {p_h} \|^2
$$
$$
\| m(h, y|\theta) - L(h, y) \|^2 = (  \log p(h|\theta)_{\hat y} - \log {p_h}_{\hat y} )^2,
$$
where $\hat{y}$ is the index of ``1'' in the one-hot encoded label vector $y$. 
Consequently loss supervision makes sure that the internal prediction $\log p(h|\theta)$ for the true label $\hat{y}$ is close to the current prediction of the whole model $\log p_h$. On the other hand matching partial derivatives wrt. to label makes sure that predictions for all the classes are close to each other. Finally if we use both -- we get a weighted sum, where penalty for deviating from the prediction on the true label is more expensive, than on all remaining ones\footnote{Adding $\partial L / \partial y$ supervision on toy MNIST experiments increased convergence speed and stability, however due to TensorFlow currently not supporting differentiating cross entropy wrt. to labels, it was omitted in our large-scale experiments.}.

\subsection{Cifar10}

All Cifar10 experiments use a deep convolutional network of following structure:
\begin{itemize}
\item 64 3x3 kernels with stride 1
\item BatchNorm and ReLU nonlinearity
\item 64 3x3 kernels with stride 1
\item BatchNorm and ReLU nonlinearity
\item 128 3x3 kernels with stride 2
\item BatchNorm and ReLU nonlinearity
\item 128 3x3 kernels with stride 1
\item BatchNorm and ReLU nonlinearity
\item 128 3x3 kernels with stride 1
\item BatchNorm and ReLU nonlinearity
\item 256 3x3 kernels with stride 2
\item BatchNorm and ReLU nonlinearity
\item 256 3x3 kernels with stride 1
\item BatchNorm and ReLU nonlinearity
\item 256 3x3 kernels with stride 1
\item BatchNorm and ReLU nonlinearity
\item 512 3x3 kernels with stride 2
\item BatchNorm and ReLU nonlinearity
\item 512 3x3 kernels with stride 1
\item BatchNorm and ReLU nonlinearity
\item 512 3x3 kernels with stride 1
\item BatchNorm and ReLU nonlinearity
\item Linear layer with 10 outputs
\item Softmax
\end{itemize}
with L2 regularisation of $1e-4$. The network is trained in an asynchronous manner, using 10 GPUs in parallel. Each worker uses batch size of 32. The main optimiser is Stochastic Gradient Descent with momentm of 0.9. The learning rate is initialised to 0.1 and then dropped by an order of magniture after 40K, 60K and finally after 80K updates.

Each of the three SG modules is a convolutional network consisting of:
\begin{itemize}
\item 128 3x3 kernels with stride 1
\item ReLU nonlinearity
\item Linear layer with 10 outputs
\item Softmax
\end{itemize}
It is trained using the Adam optimiser with learning rate $1e-4$, no learning rate schedule is applied. Updates of the synthetic gradient module are performed in a Hogwild manner. Losses used for both loss prediction and gradient estimation are L1.

For direct SG model we used architecture described in the original paper -- 3 resolution preserving layers of 128 kernels of 3x3 convolutions with ReLU activations in between. The only difference is that we use L1 penalty instead of L2 as empirically we found it working better for the tasks considered.

\subsection{Imagenet}

All ImageNet experiments use ResNet50 network with L2 regularisation of $1e-4$.
The network is trained in an asynchronous manner, using 34 GPUs in parallel. Each worker uses batch size of 32. The main optimiser is Stochastic Gradient Descent with momentum of 0.9. The learning rate is initialised to 0.1 and then dropped by an order of magnitude after 100K, 150K and finally after 175K updates.

The SG module is a convolutional network, attached after second ResNet block, consisting of:
\begin{itemize}
\item 64 3x3 kernels with stride 1
\item ReLU nonlinearity
\item 64 3x3 kernels with stride 2
\item ReLU nonlinearity
\item Global averaging
\item 1000 1x1 kernels
\item Softmax
\end{itemize}
It is trained using the Adam optimiser with learning rate $1e-4$, no learning rate schedule is applied. Updates of the synthetic gradient module are performed in a Hogwild manner. Sobolev losses are set to L1.

Regular data augmentation has been applied during training, taken from the original Inception V1 paper.

\section{Gradient-based attention transfer}

Zagoruyko et al.~\cite{zagoruyko2016paying} recently proposed a following cost for transfering attention model $f$ to model $g$ parametrised with $\theta$, under the cost $L$:

\begin{equation}
L_\text{transfer}(\theta) = L(g(x|\theta)) + \alpha \| \partial L(g(x|\theta)) / \partial x - \partial L(f(x)) / \partial x \|_2
\label{eq:transfer}
\end{equation}

where the first term simply is the original minimisation problem, and the other measures loss sensitivity of the target ($f$) and tries to match the corresponding quantity in the model $g$. This can be seen as a Sobolev training under four additional assumptions:
\begin{enumerate}
\item ones does not model $f$, but rather $L(f(x))$ (similarly to our Synthetic Gradient model -- one constructs loss predictor),
\item  $L(f(x)) = 0$ (target model is perfect),
\item loss being estimated is non-negative ($L(\cdot) \geq 0$)
\item loss used to measure difference in predictor values (loss estimates) is L$_1$.
\end{enumerate}

If we combine these four assumptions we get

$$
L_\text{sobolev}(\theta) = \| L(g(x|\theta)) - L(f(x)) \|_1 + \alpha \| \partial L(g(x|\theta)) / \partial x - \partial L(f(x)) / \partial x \|_2
$$
$$
 = \| L(g(x|\theta)) \|_1 + \alpha \| \partial L(g(x|\theta)) / \partial x - \partial L(f(x)) / \partial x \|_2
$$
$$
 = L(g(x|\theta)) + \alpha \| \partial L(g(x|\theta)) / \partial x - \partial L(f(x)) / \partial x \|_2.
$$

Note, however than in general these approaches are not the same, but rather share the  idea of matching gradients of a predictor and a target in order to build a better model.

In other words, Sobolev training exploits derivatives to find a closer fit to the target function, while the transfer loss proposed adds a sensitivity-matching term to the original minimisation problem instead.
Following observation make this distinction more formal.
\begin{rem}
Lets assume that a target function $L \circ f$ belongs to hypotheses space $\mathcal{H}$, meaning that there exists $\theta_f$ such that $L(g(\cdot|\theta_f)) = L(f(\cdot))$. Then $\theta_f$ is a minimiser of Sobolev loss, but does not have to be a minimiser of transfer loss defined in Eq.~($\ref{eq:transfer}$).
\end{rem}
\begin{proof}
By the definition of Sobolev loss it is non-negative, thus it suffices to show that $L_\text{sobolev}(\theta_f) = 0$, but 
\begin{equation*}
\begin{aligned}
L_\text{sobolev}(\theta_f) &= \| L(g(x|\theta_f)) - L(f(x)) \| + \alpha  \| \partial L(g(x|\theta_f)) / \partial x - \partial L(f(x)) / \partial x \| \\
&= \| L(f(x)) - L(f(x)) \| + \alpha \| \partial L(f(x)) / \partial x - \partial L(f(x)) / \partial x \| = 0.
\end{aligned}
\end{equation*}
By the same argument we get for the transfer loss
\begin{equation*}
\begin{aligned}
L_\text{transfer}(\theta_f) &= L(g(x|\theta_f)) + \alpha  \| \partial L(g(x|\theta_f)) / \partial x - \partial L(f(x)) / \partial x \| \\
&= L(g(x|\theta_f)) + \alpha \| \partial L(f(x)) / \partial x - \partial L(f(x)) / \partial x \| = L(g(x|\theta_f)).
\end{aligned}
\end{equation*}
Consequently, if there exists another $\theta_h$ such that $L(g(x|\theta_h)) < L(g(x|\theta_f)) - \alpha  \| \partial L(g(x|\theta_h)) / \partial x - \partial L(f(x)) / \partial x \|$, then $\theta_f$ is not a minimiser of the loss considered.

To show that this final constraint does not lead to an empty set, lets consider a class of constant functions $g(x|\theta) = \theta$, and $L(p) = \|p\|^2$. Lets fix some $\theta_f > 0$ that identifies $f$, and we get:
$$
L_\text{transfer}(\theta_f) =  L(g(x|\theta_f)) = \theta_f^2 > 0
$$
and at the same time for any $|\theta_h| < \theta_f$ (i.e. $\theta_h = \theta_f / 2$) we have:
\begin{equation*}
\begin{aligned}
L_\text{transfer}(\theta_h) &= L(g(x|\theta_h)) + \alpha  \| \partial L(g(x|\theta_h)) / \partial x - \partial L(g(x|\theta_f)) / \partial x \|\\ 
&= \theta_h^2 + \alpha (0 - 0) = \theta_h^2 < \theta_f^2 =  L_\text{transfer}(\theta_f).
\end{aligned}
\end{equation*}
\end{proof}

\end{document}